\def\eqref#1{equation~\ref{#1}}
\def\1{\bm{1}}
\DeclareMathAlphabet{\mathsfit}{\encodingdefault}{\sfdefault}{m}{sl}
\SetMathAlphabet{\mathsfit}{bold}{\encodingdefault}{\sfdefault}{bx}{n}
\newcommand{\E}{\mathbb{E}}
\newcommand{\R}{\mathbb{R}}
\def\fr#1#2{{\textstyle\frac{#1}{#2}}} 
\newtheorem{lemma}{Lemma}
\newtheorem{definition}{Definition}
 \newtheorem{theorem}{Theorem}
\newcommand{\err}{\textnormal{err}}
\newcommand{\bI}{\mathbb{I}}
\icmltitlerunning{On the Expressivity of Selective State-Space Layers}
\begin{document}

\twocolumn[
\icmltitle{On the Expressivity of Selective State-Space Layers:\\ A Multivariate Polynomial Approach}



\icmlsetsymbol{equal}{*}

\begin{icmlauthorlist}
\icmlauthor{Edo Cohen-Karlik}{equal,yyy}
\icmlauthor{Itamar Zimerman}{equal,yyy}
\icmlauthor{Liane Galanti}{equal,yyy}
\icmlauthor{Ido Atad}{yyy}
\icmlauthor{Amir Globerson}{yyy}
\icmlauthor{Lior Wolf}{yyy}
\end{icmlauthorlist}

\icmlaffiliation{yyy}{The School of Computer Science, Tel Aviv University}

\icmlcorrespondingauthor{Edo Cohen-Karlik}{edocohen@mail.tau.ac.il}

\icmlkeywords{Mamba, S6, Expressivity, Generelization}

\vskip 0.3in
]



\printAffiliationsAndNotice{\icmlEqualContribution} 

\begin{abstract}
Recent advances in efficient sequence modeling have introduced selective state-space layers, a key component of the Mamba architecture, which have demonstrated remarkable success in a wide range of NLP and vision tasks. 
While Mamba’s empirical performance has matched or surpassed SoTA transformers on such diverse 
benchmarks, the theoretical foundations underlying its powerful representational capabilities remain less explored. In this work, we investigate the expressivity of selective state-space layers using multivariate polynomials, and prove that they surpass {\color{black}linear} transformers in expressiveness. Consequently, our findings reveal that Mamba offers superior representational power over {\color{black}linear} attention-based models for long sequences{\color{black}, while not sacrificing their generalization.} Our theoretical insights are validated by a comprehensive set of empirical experiments on various datasets.
\vspace{-6pt}
\end{abstract}

\vspace{-6pt}
\section{Introduction}
Sequence modeling has been the focus of many works in recent years, with remarkable results enabling applications such as ChatGPT. To date, these models are largely based on the Transformer architecture~\cite{NIPS2017_3f5ee243}. While transformers have proven extremely effective, they suffer from several drawbacks compared to traditional recurrent models, one of which is their computational complexity which scales quadratically with the input sequence length.

In attempt to mitigate the computational inefficiency of sequence modeling of transformers, a line of work has attempted to resurrect RNNs
, \cite{gu2021efficiently,dss,gu2021combining} 
have introduced a series of architectures called State Space Models (SSMs) which include a linear recurrence that admits efficient computations and special structure on the transition matrices.

While these architectures have demonstrated impressive performance in long sequence modeling, their performance on fundamental tasks such as language modeling fall short compared to transformers, mostly due to intrinsic superiority of transformers in modeling interactions between different elements of the input sequence. Recent work~\cite{gu2023mamba}, has introduced Mamba, an SSM variant with Selective SSMs (S6) as its core block. In an S6 layer, the parameters are a function of the input, providing the SSM with content awareness. The empirical success of Mamba is undeniable, with applications spanning large-scale language modeling~\cite{zuo2024falcon,lieber2024jamba,waleffe2024empirical}, image~\cite{mambaViT1}, 
and video~\cite{li2025videomamba} 
processing, medical imaging~\cite{mambaMedical3}, 
 tabular data analysis~\cite{ahamed2024mambatab}, Reinforcement Learning~\cite{lv2024decision}, point-cloud analysis~\cite{mambaPoint}, graph processing~\cite{mambaGraph1}
, and N-dimensional sequence modeling~\cite{mambaND}.

The success of Mamba models across various domains ignites interest in their theoretical properties. Establishing a comprehensive theoretical understanding is crucial, as it enhances our knowledge of these layers, promotes their adoption within the research community, and paves the way for future architectural advancements. Additionally, since S6 can be considered a variant of attention with linear complexity~\cite{ali2024hidden}, deeper theoretical insights could elucidate the relationships between gated RNNs, transformers, and SSMs, thereby advancing our knowledge of these interconnected architectures. 
Initial efforts to establish a theoretical framework for the expressiveness of selective (and non-selective) SSM layers have been undertaken by several researchers. Using Rough Path Theory, \citet{cirone2024theoretical} demonstrated that diagonal selective SSMs, such as Mamba, possess less expressive power than their non-diagonal counterparts. Additionally, \citet{merrill2024illusion} investigated the expressiveness relationships between SSM variants and transformers using the lens of circuit complexity, revealing that both models share the same expressive power (belonging to ${TC}^0$). Finally, \citet{jelassi2024repeat} examined the copying ability of various SSM variants compared to transformers. It concluded that from both theoretical and empirical perspectives SSMs struggle with this task. While these significant studies highlight the limited expressive capabilities of Mamba models compared to other architectures, our work introduces a different trend. We demonstrate the superior expressive power of S6 layers, using a theoretical framework based on multivariate polynomials. 

{\color{black}In addition to exploring the expressiveness gap between transformers and S6 layers, we develop norm-based length-agnostic generalization bounds for S6 layers. Suggesting that the added expressivity of the selective mechanism does not hinder the generalization properties compared to traditional SSMs as studied in \citep{liu2024generalization},  and that while polynomial expressivity increases with sequence length, it does not impact generalization.}
\noindent{\textbf{Our main contribution}} encompasses the following aspects: (i) We present simplified polynomial variants of Mamba that exhibit comparable performance on NLP and vision tasks, promoting a simpler model that can serve as a foundation for other theoretical contributions, as well as shed light on the inner dynamics of Mamba and its critical components. (ii) By establishing the connection between multivariate polynomials and S6 layers, we theoretically prove that S6 layers are expressive as a {\color{black}linear} self-attention with depth that scales logarithmically with the sequence length; that is, there are functions that can be modeled with a single S6 that would require a logarithmic number of {\color{black}linear} 
attention layers. More generally, we establish that a Mamba model with only 4 layers is sufficient to represent the class of multivariate polynomials with bounded degree L. In contrast, a {\color{black}linear} attention-based model requires a logarithmic number of layers in $L$ to achieve the same representational capability. (iii) 
Through experiments on a synthetic dataset in a controlled environment designed to isolate expressivity issues, we empirically validate that our theory is reflected in practice. (iv) {\color{black}Finally, although S6 has better polynomial expressivity for long sequences, we show that this does not come at the cost of limited generalization, by proving the first length-agnostic generalization bound for S6 
. This leads us to conclude that S6 layer possesses superior theoretical properties over linear attention for long-range tasks.}  
\begin{figure}[t]
    \centering
    \includegraphics[width=0.82\linewidth]{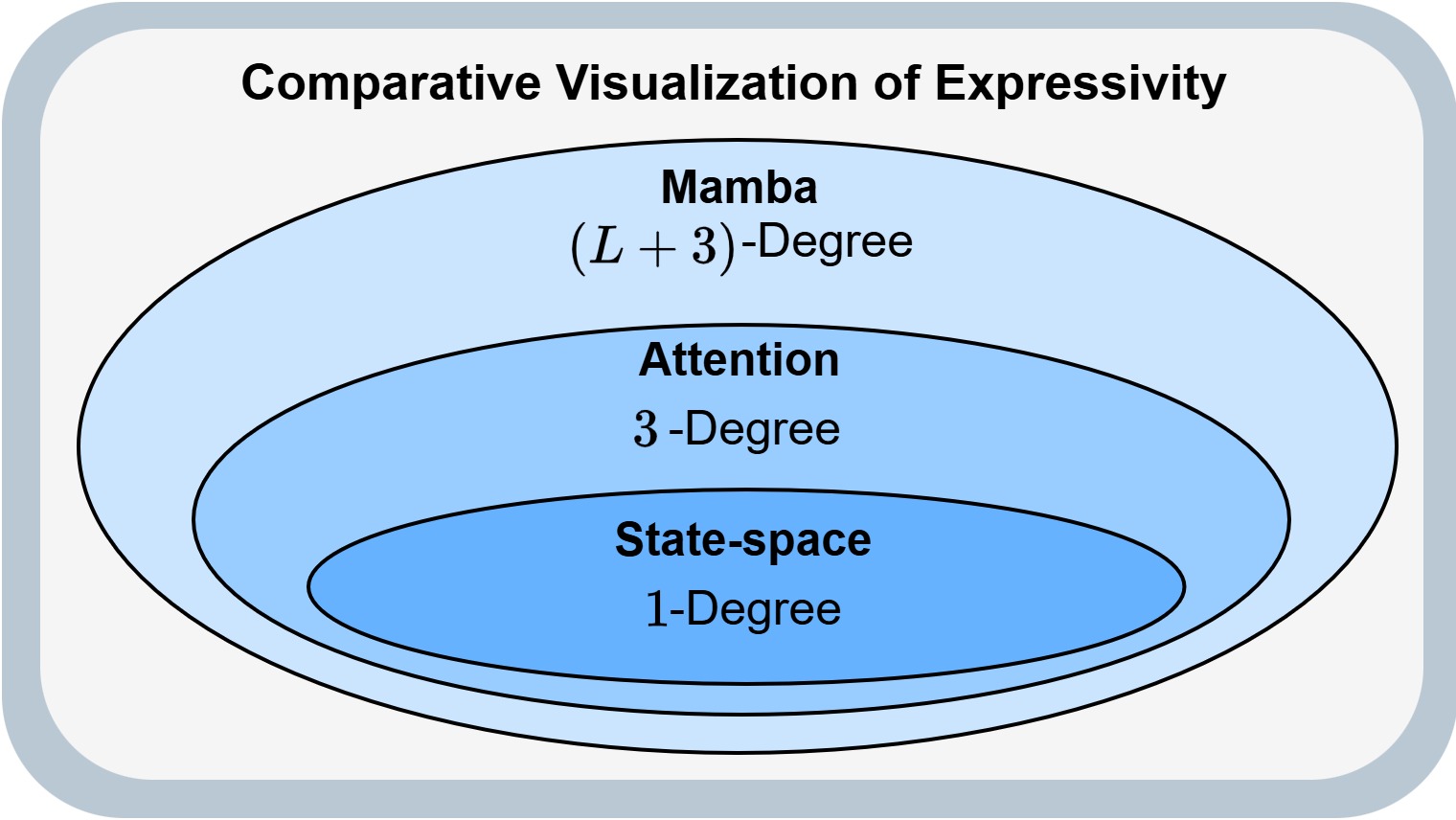}
    \vspace{-3pt}
    \caption{\textbf{Expressivity via Polynomial Degree:} Our characterization of SSMs, S6 layers, and causal self-attention via multivariate polynomials allows us to identify the expressiveness gap between these layers through maximal polynomial degree.}
    \label{fig:mainfig}
    \vspace{-12pt}
\end{figure} 
\vspace{-12pt}
\section{Related Work}
\vspace{-3pt}
SSMs have gained a lot of traction due to their remarkable performance and computational efficiency. Their theoretical properties have been the focus of many recent works; \citet{merrill2024illusion} compare the expressive capacity of (non-selective) SSMs and transformers, concluding that both architectures belong to the same complexity class ($TC^0$).\footnote{$TC^0$ is the complexity class that can be decided by polynomial-sized Boolean circuits.} \citet{sarrof2024expressive} conduct a more refined analysis and show that transformers and SSMs occupy different portions of $TC^0$. Another work showed that a single-layer Transformer with $N$ heads can simulate any state space model with $N$ channels~\citep{zimerman2023long}.

\citet{cirone2024theoretical} show that the selective mechanism introduced in Mamba results with more expressive architectures compared to traditional (non-selective) SSMs.
\citet{ali2024hidden} show that there are functions that S6 can implement while transformers cannot.

In this work we compare the expressive power of S6 to those of transformers. We show that under certain assumptions which we justify empirically, a constant number of S6 layers are dense in the polynomial function space while transformers and non-selective SSMs are far less expressive. %
In addition to discussing expressivity, we also provide the first norm-based generalization bound for S6. Relevant related works are detailed in Appendix~\ref{sec:RelatedWorkGeneralization}.

\vspace{-5pt}
\section{Background}\label{sec:background}
\vspace{-3pt}
In this section we present the technical details required for the theoretical analysis and provide relevant 
notations.

\noindent{\textbf{{Notations\quad}}  Let $X \in \mathbb{R}^{D \times L}$ be an input sequence of length $L$ with dimension size $D$, denote the element in the $i$-th channel and position $j$ as $X_{ij}$. We denote the entire channel at a specific position $j$ and the entire sequence at a specific channel $i$ as $X_{*j}$ and $X_{i*}$, respectively. For simplicity, we denote a general single channel of $X$ without channel index by $x := (x_1, x_2, \cdots, x_L)$ such that $x_i \in \mathbb{R}$.

\noindent{\textbf{{Mamba\quad}} Given these notations, a Mamba block, which is built on top of the S6 layer
, is specified as follows:
{\small
\begin{equation} \nonumber
    X = \sigma(\text{Conv1D}(\text{Linear}(U)), \quad Z = \sigma(\text{Linear}(U)) 
    \vspace{-3pt}
\end{equation}
}
\vspace{-8pt}
{\small
\begin{equation} \label{eq:mamba1}
    Y = \text{S6}(X),\quad \hat{Y} = \text{Linear}(Y \otimes Z)
\end{equation}
}

Here, $U$ is the input to the Mamba block, and $X$ is the input to the S6 layers, $X,Z,Y,\hat{Y} \in \mathbb{R}^{L \times D}$, the linear layers operate independently for each sequence element, $\sigma$ represents SiLU activation function, and $\otimes$ denotes element-wise multiplication with the gate branch. 

\noindent{\textbf{S6 }}
An S6 layer is a recent variant of SSM. A standard diagonal SSM is parameterized by a diagonal transition matrix $A \in \mathbb{R}^{N \times N}$, input and output matrices $B,C \in \mathbb{R}^{N \times 1}$, and a timescale $\Delta \in \mathbb{R}$. An input scalar sequence $x$ is mapped to an output scalar sequence $y$ via the following recurrent rule:
\vspace{-5pt}
{\small
\begin{equation}\nonumber
\vspace{-3pt}
h_t =  \bar{A} h_{t-1} + \bar{B}x_t, \quad y_k = C h_t
\vspace{-5pt}
\end{equation}
}
{\small
\begin{equation}\label{eq:recRule}
\vspace{-3pt}
\bar{A}=f_A (A, \Delta), \quad  \bar{B}=f_B (B, \Delta)
\end{equation}
}
%
where $f_A,f_B$ are discretization functions, and the discrete system matrices are $\bar{A} \in \mathbb{R}^{N \times N}$ and $\bar{B} \in \mathbb{R}^{N \times 1}$. The recurrent rule in Eq.~\ref{eq:recRule} can be computed efficiently in parallel on modern hardware accelerators using work-efficient parallel scans~\cite{smith2022simplified} or a simple scalar convolution via FFTs~\cite{gu2021combining}. Note that Eq.~\ref{eq:recRule} is a map from $\mathbb{R}^L$ to $\mathbb{R}^L$, and to process $D$ channels, multiple independent instances are used.

Contrary to traditional SSMs, which utilize time-invariant system matrices and process each channel independently, S6 layers incorporate a data-dependent mechanism that is parameterized by $S_B, S_C \in \mathbb{R}^{N \times D}$, $A \in \mathbb{R}^{D \times N}$ and $S_{\Delta} \in \mathbb{R}^{1 \times D}$ to define the time-variant matrices as follows:
{\small
\begin{equation}\nonumber 
    \vspace{-3pt}
    B_t = S_B X_{*t}, \ C_t = S_C X_{*t}, \ \Delta_t = \text{softplus}(S_{\Delta} X_{*t})
    \vspace{-2pt}
\end{equation}
}
\vspace{-5pt}
{\small
\begin{equation}\label{eq:TimeVariantMatrices1} 
\bar{A}_t = \exp(\Delta_t A), \quad \bar{B}_t = \Delta_t B_t
\end{equation}
}
%
The resulting time-variant recurrent rule is:
{\small
\begin{equation}\label{eq:timeVaraintRecRule}
     \vspace{-2pt}
     h_t =  \bar{A}_t h_{t-1} + \bar{B}_t x_t, \quad y_k = C_t h_t
\end{equation}
}
Our analysis focuses on the regime of 'many-to-one', which deals with models that operate on sequences 
and produce a single output after processing the entire input sequence.
%

\vspace{-4pt}
\section{Theoretical Results}
\vspace{-3pt}
We begin by presenting our simplified model in Sec.~\ref{sec:simplify}, which forms the basis for our theory on the expressivity {\color{black}and generalization of S6 layers, discussed in} Sec.\ref{sec:expressivity} and Sec.~\ref{sec:generalization}, respectively. In the experiments section we demonstrate that the simplified model used in our exposition achieves comparable results to those of the original S6 layers, encouraging further exploration of the suggested simplification.
%

\vspace{-3pt}
\subsection{Model Simplifications\label{sec:simplify}} %
\vspace{-2pt}
The original S6 layer is parameterized by $S_{\Delta}$, $S_{B}$, $S_C$ and $A$, and it is defined in Eqs. \ref{eq:TimeVariantMatrices1} and \ref{eq:timeVaraintRecRule}.%

Our approach utilizes a simplified model described below:
{\small
\begin{equation*}
      \bar{B}_i = S_B x_i,\quad C_i = S_C x_i,\quad 
\Delta_i = p_1 (S_{\Delta} x_i),\quad \bar{A}_i =  p_2 ( \Delta_i A) 
\end{equation*}
}

where $p_1$ and $p_2$ represent polynomials that operate independently per element, for instance, a second-degree Taylor approximation for softplus and exponent accordingly. An equivalent model would be:
{\small
\begin{equation}\label{eq:simplifiedModel}
 \vspace{-5pt}
    C_i = S_C x_i, \quad \bar{B}_i = S_B x_i, \quad \bar{A}_i = p_2\left(p_1 \left(\frac{S_\Delta x_i}{\sqrt{D}}\right) A\right)
\end{equation}
}

in which $D$ is the width of the model, and $\frac{1}{\sqrt{D}}$ is a constant normalization factor. This model can be interpreted as state-space layer without discretization, with $\bar{A}$ being selective, and $p_1$ and $p_2$ are stabilizers designed to control the values of $A$, which must be positive. It is imperative to note that standard SSMs without discretization are both effective and simple~\cite{gupta2022simplifying}.


For simplicity, we define an additional polynomial $p_A(x) = p_2\left(p_1 \left(\frac{x}{\sqrt{D}}\right) A\right)$ which is parameterized by $A$ and ties $p_1$ and $p_2$. Hence, we can denote $\bar{A}_i = p_A(S_{\Delta} x_i)$.

Alternatively we can utilize the following simplified non-polynomial model:
{\small
\begin{equation}\nonumber
\vspace{-3pt}
\bar{B}_i = S_B x_i, \quad C_i = S_C x_i
\vspace{-2pt}
\end{equation}
}
\vspace{-5pt}
{\small
\begin{equation}\label{eq:model}
\Delta_i = \text{softplus}(S_{\Delta} x_i),\quad
\bar{A}_i =  \exp ( \Delta_i A)
\end{equation}
}
%
%
%
%
\vspace{-18pt}
\subsection{Expressivity\label{sec:expressivity}
}
\vspace{-3pt}
We identify an expressivity gap between S6 and attention via multivariate polynomials. 
This gap is delineated through Thm.~\ref{theorem:exprrsFull} and Thm.~\ref{theorem:AnyPolywithMambas}. The former establishes that attention models require $O(\log L)$ layers to represent $L$-degree multivariate polynomials, whereas Mamba models can express such polynomials within a single layer. Furthermore, Thm.~\ref{theorem:AnyPolywithMambas} extends this finding by demonstrating that the expressiveness gap is not limited to anecdotal or edge-case polynomials. Rather, it encompasses a broad spectrum of polynomial functions.


\begin{theorem}\label{theorem:exprrsFull}(informal)
Consider an S6 layer and single Transformer layer, both with hidden dimension $N$. 
For input sequences of length \( L \geq 3 \), a single layer of Mamba is logarithmically more expressively efficient in depth compared to a single causal {\color{black}linear} self-attention layer with a single head and polynomial activations instead of softmax. 
\end{theorem}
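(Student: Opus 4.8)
The plan is to exploit the multivariate-polynomial viewpoint on both architectures and to separate them by the maximal polynomial degree realizable at a fixed depth. First I would unroll the many-to-one S6 recurrence of Eq.~\ref{eq:timeVaraintRecRule} at the terminal position, writing
\begin{equation}\nonumber
y_L \;=\; C_L \sum_{t=1}^{L}\Big(\prod_{s=t+1}^{L}\bar{A}_s\Big)\,\bar{B}_t\, x_t ,
\end{equation}
and substitute the simplified selective parameterization of Eq.~\ref{eq:simplifiedModel}, namely $\bar{B}_t = S_B x_t$, $C_L = S_C x_L$, and $\bar{A}_s = p_A(S_\Delta x_s)$. The key observation is that the factor $\prod_{s=t+1}^{L}\bar{A}_s$ is a product of $L-t$ polynomials, each depending on a distinct coordinate $x_s$; for $t=1$ this already forces a monomial containing all of $x_1,\dots,x_L$, so $y_L$ is a multivariate polynomial of total degree $\Theta(L)$ in the inputs. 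I would then fix a concrete target polynomial $P_L$ of degree $\Theta(L)$ — a product or elementary-symmetric-type polynomial is the cleanest choice — and exhibit an explicit assignment of $S_B,S_C,S_\Delta,A$ (respecting the hidden dimension $N$) for which a single S6 layer computes $P_L$.

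The second ingredient is a degree-growth bound for the attention side. I would prove a lemma stating that one causal linear self-attention layer with a single head and a fixed degree-$p$ polynomial activation, applied to a representation whose coordinates are polynomials of total degree at most $d$, outputs coordinates of total degree at most $c\,d$ for a constant $c=c(p)$ independent of $L$. This follows by inspection: queries, keys, and values are linear in the degree-$d$ representation, the activation raises the bilinear query-key score to degree at most $2pd$, and multiplying by the degree-$d$ value contributes at most $(2p+1)d$. Iterating over depth by induction, starting from degree $1$ at the linear embedding, gives that a depth-$k$ stack realizes only polynomials of total degree at most $c^{k}$.

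Combining the two sides yields the separation. Since $P_L$ has degree $\Theta(L)$ and a depth-$k$ attention stack caps the degree at $c^{k}$, representing $P_L$ forces $c^{k}\ge \Theta(L)$, i.e.\ $k=\Omega(\log L)$, whereas a single S6 layer already realizes $P_L$; this is exactly the claimed logarithmic depth-efficiency gap, valid for $L\ge 3$.

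I expect the main obstacle to lie in making the per-layer degree upper bound for linear attention fully rigorous and converting it into a depth lower bound. Two points need care: (i) the activation degree $p$ must be fixed independently of $L$, otherwise a single attention layer with an $L$-dependent activation could inflate the degree and artificially close the gap; and (ii) the degree-growth count must account for every component of the layer — the linear query, key, and value maps, the polynomial score activation, any pointwise nonlinearity, and the residual path — so that the constant per-layer multiplier $c$, and therefore the $c^{k}$ cap on depth-$k$ representations, is genuinely valid. Once the cap is secured the lower bound is immediate, because the degree of $P_L$ is an honest polynomial invariant equal to $\Theta(L)$; by contrast, the S6 realizability and the geometric degree counting are comparatively routine.
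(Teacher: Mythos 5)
Your proposal is correct and follows essentially the same route as the paper: both sides are characterized as multivariate polynomials, the S6 output at the terminal position is shown to have total degree $\Theta(L)$ via the product $\prod_{s}\bar{A}_s$ of input-dependent transition factors, a single linear-attention layer is capped at constant degree, and composition of polynomials multiplies maximal degrees to force $\Omega(\log L)$ depth. Your per-layer degree-growth lemma with the explicit constant $c(p)$ for degree-$p$ activations is a slightly more careful version of the paper's degree-$3$ (and, in its discussion, degree-$3P+1$) count, but it is the same argument.
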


We consider this theorem relatively surprising, as transformers are considered highly expressive models, in contrast to state-space layers, which are traditionally constrained. The proof follows from Lemma~\ref{lemma:dir2} and Lemma~\ref{lemma:dir1} which are presented next.


\begin{lemma}\label{lemma:dir2}
There exists a function \( f : \mathbb{R}^L \rightarrow \mathbb{R} \) that can be implemented by one channel of S6 such that a single {\color{black}linear} attention head would require at least  \( O(\log L) \) layers to express $f$.
\end{lemma}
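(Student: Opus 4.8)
The plan is to exhibit a single-channel S6 map whose many-to-one output is a multivariate polynomial containing a monomial in which \emph{every} input coordinate $x_1,\dots,x_L$ appears, and then to show that such an ``all-positions'' interaction cannot be produced by a shallow stack of linear-attention layers. Concretely, I would instantiate the simplified model of Eq.~\ref{eq:simplifiedModel} with state dimension $N=1$ and scalar parameters $S_B,S_C,S_\Delta,A$ chosen so that $\bar A_i=p_A(S_\Delta x_i)$ is a non-constant polynomial in $x_i$ (generically it is, since $p_A$ composes the Taylor polynomials of softplus and $\exp$ with a nonzero linear argument), while $\bar B_i=S_B x_i$ and $C_i=S_C x_i$ are linear. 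Define $f(x):=y_L$, the output of this channel at the final position.

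First I would make the S6 side explicit by unrolling Eq.~\ref{eq:timeVaraintRecRule}:
\[
f(x)=y_L=C_L\sum_{j=1}^{L}\Big(\prod_{k=j+1}^{L}\bar A_k\Big)\bar B_j\,x_j .
\]
The $j=1$ summand $C_L\big(\prod_{k=2}^{L}\bar A_k\big)\bar B_1 x_1$ is the only one containing the variable $x_1$, so its leading monomial cannot be cancelled by any other summand. Taking leading terms of each univariate factor gives a monomial proportional to $x_1^{2}\,x_2^{d}\cdots x_{L-1}^{d}\,x_L^{d+1}$ with $d=\deg p_A\ge 1$; this monomial has all of $x_1,\dots,x_L$ present and total degree $\Theta(L)$, and one checks it is the unique global top-degree term. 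Hence $f$ is exactly realized by one S6 channel and simultaneously has (i) a monomial whose \emph{position-support} is all of $\{1,\dots,L\}$ and (ii) degree $\Theta(L)$.

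Next I would bound a depth-$m$ stack of single-head causal linear (polynomial) attention layers. Let $s^{(m)}$ be the maximal position-support over all monomials appearing in any position's representation after $m$ layers, so $s^{(0)}=1$ (token $i$ depends only on $x_i$). I would prove inductively that $s^{(m)}\le c\,s^{(m-1)}$ for a constant $c$ depending only on the fixed degrees of the score polynomial and of any position-wise feed-forward map, not on $L$: in $\mathrm{attn}_i=\sum_j \phi(q_i^\top k_j)\,v_j$ every monomial comes from a single index $j$ and is a product of boundedly many monomials drawn from the representations at positions $i$ and $j$, so its support is a union of boundedly many sets each of size $\le s^{(m-1)}$; residual connections merely re-inject the position-$i$ representation, and position-wise maps act within one position, neither breaking the recursion. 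Thus $s^{(m)}\le c^{m}$, and representing $f$ forces $c^{m}\ge L$, i.e.\ $m\ge\log_c L=\Omega(\log L)$. A parallel computation tracking polynomial \emph{degree} yields the same bound and matches the degree viewpoint of \figref{fig:mainfig}; the support version is preferable because, at the first layer, products of monomials sharing the support $\{i,j\}$ keep the support at $2$ irrespective of activation degree, making the bound robust even to high-degree activations.

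The main obstacle is this inductive step: one must track \emph{per-monomial} position-support rather than the total cone of influence (which already fills $\{1,\dots,i\}$ after a single layer) and verify constant-factor growth through the bilinear score $q_i^\top k_j$, the polynomial activation, value aggregation, residual connections, and position-wise MLPs. A secondary point is fixing an input/readout convention that renders the two models comparable as maps $\mathbb{R}^L\to\mathbb{R}$ (embedding each scalar $x_i$ as a token, reading out the last position) and noting that additive positional encodings, being input-independent, enlarge no position-support. With these in hand the depth lower bound $m=\Omega(\log L)$ follows immediately.
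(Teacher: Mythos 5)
Your proposal is correct and follows the same overall architecture as the paper's proof: characterize one S6 channel and one linear-attention head as multivariate polynomials, observe that S6 realizes a polynomial whose complexity grows linearly in $L$ while each attention layer can only multiply that complexity by a constant, and conclude a depth lower bound of $\Omega(\log L)$. The one genuine difference is the invariant you track. The paper (see Lemma~\ref{lemma:dir2appendix}) uses the \emph{maximal total degree}: a single causal linear-attention head is a degree-$3$ polynomial, so $N$ stacked layers have degree at most $3^N$, whereas the unrolled S6 recurrence in Eq.~\ref{eq:SimpleMamba1Chan} has degree $L+2$, forcing $N=\Omega(\log L)$. You instead track the \emph{per-monomial position support}, exhibiting via the $j=1$ summand of the unrolled recurrence a non-cancellable monomial containing all of $x_1,\dots,x_L$, and proving the support can grow by at most a constant factor per layer. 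Your version buys robustness — it survives high-degree score activations and position-wise polynomial MLPs, where the raw degree bound would degrade, and you correctly flag that one must track support per monomial rather than the cone of influence — at the cost of a more delicate induction through the score, value aggregation, residuals, and MLPs. The paper's degree argument is shorter and lines up directly with the degree-based picture in \figref{fig:mainfig}; your parallel degree computation recovers it exactly. Both routes are sound, and your construction of the hard function (with $\deg p_A\ge 1$ rather than the paper's normalization to a linear $P_A$) is a legitimate instantiation of the same family.
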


\begin{lemma}\label{lemma:dir1}
Any function that can be expressed by a single {\color{black}causal linear} attention head can 
be expressed by a single channel of S6.
\end{lemma}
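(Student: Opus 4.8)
The plan is to exhibit, for the function computed by an arbitrary single causal linear attention head, an explicit choice of the S6 parameters $S_B, S_C, S_\Delta, A$ realizing the same many-to-one map, working entirely at the level of the closed-form polynomials each layer computes. First I would unroll the recurrence in Eq.~\ref{eq:timeVaraintRecRule} at the terminal position to obtain
\[
y_L^{\mathrm{S6}} \;=\; C_L\sum_{t=1}^{L}\Big(\prod_{s=t+1}^{L}\bar A_s\Big)\bar B_t\,x_t,
\]
and substitute the simplified parameterization of Eq.~\ref{eq:simplifiedModel}, namely $C_L = x_L S_C$, $\bar B_t x_t = S_B x_t^2$, and the diagonal $\bar A_s = p_A(S_\Delta x_s)$, giving
\[
y_L^{\mathrm{S6}} \;=\; x_L\, S_C^{\top}\sum_{t=1}^{L}\Big(\prod_{s=t+1}^{L} p_A(S_\Delta x_s)\Big) S_B\, x_t^{2}.
\]
In parallel I would write the head's output as $y_L^{\mathrm{attn}} = \sum_{t=1}^{L} P\!\big(q(x_L)^{\top}k(x_t)\big)\,v(x_t)$; since $q,k,v$ are linear in the scalar channel, the score is bilinear, $q(x_L)^{\top}k(x_t) = \kappa\,x_L x_t$, and expanding the polynomial activation $P(z)=\sum_j p_j z^{j}$ with $v(x_t)=\nu x_t$ yields the separable form $\sum_{j\ge 0} p_j\kappa^{j}\nu\, x_L^{\,j}\sum_{t=1}^{L} x_t^{\,j+1}$.

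The core construction handles the canonical linear head (identity activation, $P(z)=z$), whose output is $\kappa\nu\,x_L\sum_{t} x_t^{2}$. Here I would switch off selectivity by choosing $p_1,p_2$ so that $p_A\equiv I$ (a constant transition), which collapses the suffix product and leaves $y_L^{\mathrm{S6}} = \langle S_C,S_B\rangle\,x_L\sum_t x_t^{2}$; matching the single scalar $\langle S_C, S_B\rangle = \kappa\nu$ then reproduces the head exactly. This already shows that the polynomial class of a single S6 channel contains that of a basic linear attention head. To cover a genuine polynomial activation, where the target carries the higher separable monomials $x_L^{\,j}\sum_t x_t^{\,j+1}$, I would re-enable selectivity and use $\bar A_L = p_A(S_\Delta x_L)$ as a tunable polynomial in $x_L$ to inject the required powers of $x_L$ into the readout, while dedicating separate coordinates of the $N$-dimensional state (with distinct diagonal entries of $A$) to carry the different power sums, combining them linearly through $S_C$.

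The main obstacle is precisely the suffix product $\prod_{s=t+1}^{L}\bar A_s$. With constant $\bar A$ the match is immediate, but reaching the higher separable monomials forces $\bar A$ to depend on the input through the same position-independent map $p_A(S_\Delta\,\cdot\,)$ at every step; because this cannot be localized to the terminal transition, the suffix product necessarily introduces interaction terms over intermediate positions that a single attention head never produces. The technical heart is therefore to show these interaction terms can be made to cancel — for instance by balancing coefficients across the $N$ state coordinates, or by restricting attention to activations whose expansion avoids the $x_L$-free monomial $\sum_t x_t$, which the degree-one maps $C_i = x_i S_C$ and $\bar B_i = x_i S_B$ cannot generate — so that the residual equals the target separable polynomial. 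I expect this cancellation/separation analysis, rather than the core construction, to be the crux; once it is established the degrees match term by term, giving the containment, which together with Lemma~\ref{lemma:dir2} yields the strict expressivity gap of Theorem~\ref{theorem:exprrsFull}.
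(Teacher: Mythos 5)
Your core construction is correct and is essentially the paper's own proof: the paper likewise kills the selectivity of the transition (taking $P_A$ affine, setting $S_\Delta=0$ and $\bar A_i\equiv 1$ so the suffix product collapses), identifies $S_C$ with $W_Q$ and $S_B$ with $W_K$, and absorbs $W_V$ into the linear layer following S6 --- exactly your constant-$\bar A$ matching of $\langle S_C,S_B\rangle=\kappa\nu$. The extended case you flag as the unresolved crux (polynomial score activations $P(z)=\sum_j p_j z^j$, requiring cancellation of the interaction terms from the suffix product) is not needed for this lemma: the statement and the paper's proof concern only the plain causal $QK^\top V$ head, for which your core construction already closes the argument.
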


For the complete details and proof of Lemma~\ref{lemma:dir1}, we refer the reader to the appendix (Lemma \ref{lemma:dir1Appendix}). As for Lemma~\ref{lemma:dir2}, we present here a proof sketch for a simplified version (see Lemma~\ref{lemma:dir2appendix}
in the appendix for the complete proof). In this simplified case, our proof focuses exclusively on linear-attention
. Additionally, we consider models that process scalar sequences. 

\begin{proof}[Proof of Lemma~\ref{lemma:dir2} (without Softmax
)]
Our proof relies on the characterization of the hypothesis classes that are realizable through S6 and self-attention via {\underline{multivariate polynomials}}. We start by presenting this formulation for S6:

\smallskip
\noindent\textbf{Single S6 Layer as Multivariate Polynomials\quad}
One channel of the variant of the S6 layer we consider is described in detail in Eq.~\ref{eq:simplifiedModel}.
%
%
%
Since we are interested in identifying the minimal polynomial degree required to characterize models that employ Eq.~\ref{eq:simplifiedModel}, we can assume that $P_A$ is linear. Thus, we can incorporate the coefficients of $P_A$ into $S_{\Delta}$, namely, $\bar{A}_t = S_{\Delta} x_t$. Consequently, Eq.\ref{eq:simplifiedModel} can be unrolled as follows:
{\small
\begin{equation}\label{eq:unrolledChanSl}
    y_{t}  
    = S_C S_B  \sum_{j=1}^t {S_{\Delta}}^{t-j-1} \Big{(} {\Pi_{k=j+1}^t x_k  }\Big{)} x_t {x_j}^2 
\end{equation}
}

Hence, we can characterize the last output $y_L$ as a multivariate polynomial with at least $L$ monomials, and a maximal degree of at least $L+3$. Denote $c_j = S_C S_B S_{\Delta}^{L-j-1}$,
\vspace{-5pt}
\begin{equation}\label{eq:SimpleMamba1Chan}
    y_L = \sum_{j=1}^L c_j \Big{(} \Pi_{k=j+1}^{t-1} x_k \Big{)} {x_t}^2 {x_j}^2 
\end{equation}
\noindent\textbf{Attention as Multivariate Polynomials\quad}
Given the parameters of the layers $W_Q, W_K, W_V \in \mathbb{R}$, the last element in the output sequence can be computed by:
\vspace{-5pt}
{\small
\begin{equation}
y=\frac{(x W_{Q}) (x  W_{K})^T}{\sqrt{d_k}} (x W_{V})
\end{equation}
}
\vspace{-5pt}
{\small
\begin{equation}
\small
y_L = W_Q W_K W_V \sum_{j=1}^L  {x_j}^2 x_L = \sum_{j=1}^L c_j{x_j}^2 x_L 
\vspace{-3pt}
\end{equation}
}

where $c_j = W_Q W_K W_V$. Hence, we can characterize the last output element $y_L$ of a self-attention layer by a 3-degree multivariate polynomial with $L$ monomials.

\noindent\textbf{$N$-Stacked Attention Layers as Multivariate Polynomials\quad}
In light of the characterization of one head of a self-attention layer, we can now proceed to establish the characterization of $N$-stacked self-attention layers. Since the composition of multivariate polynomials results in another multivariate polynomial, where the maximal degree of the resulting polynomial amounts to the product of the maximum degrees of the composed polynomials, we can prove by induction that $N$-Stacked layers can be represented by a polynomial with a maximal degree of $3^N$.

\noindent\textbf{Identify and Refine the Expressivity Gap\quad} The description of one head of a self-attention layer and one channel of an S6 via multivariate polynomials reveals the expressiveness gap in terms of the maximal degree, which is 3 for self-attention and $L+3$ for S6, as depicted in 
Fig.~\ref{fig:mainfig}. Thus, our formulation presents a broad family of functions that can be implemented by selective SSMs but cannot be realized by a single self-attention head. Furthermore, when processing a sequence of length $L$, it is clear that in order to express a function realized by S6 using $N$-stacked self-attention layers, $O(\log L)$ stacked layers are required.
\end{proof}

\noindent\textbf{Single SSM as Multivariate Polynomials\quad} For completeness, we formalized a single standard SSM (not S6) via multivariate polynomials. Since SSMs can be represented as a single non-circular convolution between the input \(x\) and a kernel \(\bar{K} = (C\bar{B}, C\bar{A}\bar{B}, \ldots, C\bar{A}^{L-1}\bar{B})\), it is evident that the last output element of the SSM can be expressed by a 1-degree multivariate polynomial consisting of \(L\) monomials.
%
%

\noindent\textbf{Sharpening the Expressivity Gap.} 
Thm~\ref{theorem:exprrsFull} establishes the existence of an expressivity gap. However, it remains unclear how many polynomials are encompassed within this gap, and whether they constitute a significant portion of the function class or are merely anecdotal instances. To refine our separation results 
, we now quantify the expressiveness gap by analyzing the number of layers required to represent the entire class of $L$-degree multivariate polynomials. As established previously, attention-based models necessitate $O(\log L)$ layers. In contrast, the following theorem demonstrate that a Mamba model with just 4-stacked layers and a sufficiently large number of channels can represent \textit{any multivariate polynomial of arbitrary degree}, thereby highlighting the significant expressiveness gap, which constitute a significant portion of the class of $L$-degree polynomials.

\begin{theorem}\label{theorem:AnyPolywithMambas}
Given an input scalar sequence $x \in \mathbb{R}^L $, a model with four stacked Mamba layers, a sufficiently large number of channels, learnable PE, sufficient padding and a linear encoder layer at the first layer can express any  multivariate polynomial with bounded degree of $x$.
\end{theorem}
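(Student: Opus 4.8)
The plan is to give a constructive proof: show that, across a ``sufficiently large'' number of channels, four Mamba blocks can synthesize every monomial $\prod_{i=1}^L x_i^{a_i}$ (up to the degree bound) in a dedicated channel, and that the terminal linear layer can then form an arbitrary linear combination of these channels. Since every bounded-degree multivariate polynomial in $x_1,\dots,x_L$ is exactly such a linear combination, this establishes the claim. The two primitives I will exploit are the selective scan of Eq.~\ref{eq:timeVaraintRecRule}, whose unrolling $h_L = \sum_j \big(\prod_{k>j}\bar A_k\big)\bar B_j x_j$ turns it into a sum of \emph{running products} (cf.\ Eq.~\ref{eq:SimpleMamba1Chan}), and the gated multiplication $Y\otimes Z$ of Eq.~\ref{eq:mamba1}, which supplies element-wise products; the channel-mixing linear maps and the positional encoding tie them together.

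The conceptual reason four layers suffice, in contrast to the $O(\log L)$ needed by linear attention, is that a \emph{single} scan already produces a product of up to $L$ (plus padding) factors, so its degree grows linearly in the sequence length in one shot, whereas each attention layer only multiplies the degree by a constant. First I would use the linear encoder, the learnable positional encoding, and the padding to lay out a canonical representation: each $x_i$ sits in a tagged slot, and the padded positions provide blank workspace. The tags let the data-dependent coefficients act \emph{per position}, because $\bar B_t$, $C_t$ and $\bar A_t = p_A(S_\Delta \tilde{x}_t)$ are fixed functions of the augmented input $\tilde{x}_t$ that the tag distinguishes; in particular I can force $\bar A_k$ to be the identity on ``skip'' positions and multiplication by $x_k$ on ``selected'' positions, and localize the injection $\bar B_j$ to a single site.

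Given this control, the unrolled scan reads off, at a chosen position, the product $\prod_{k\in S} x_k$ over \emph{any} subset $S$ (inject at $\min S$ and switch $\bar A_k$ between $1$ and $x_k$). High powers $x_i^{a_i}$ of a single variable, which one scan of bounded-degree factors cannot produce directly, are obtained by first routing $a_i$ copies of $x_i$ into distinct padded positions (again via an inject-and-carry scan with $\bar A=1$) and then selecting all of them in the product; this is precisely what ``sufficient padding'' buys. Dedicating one channel to each target monomial, and using the gating block to assemble or compose partial products where convenient, places every needed monomial in some channel, after which the final linear layer assigns the coefficients and sums. I would budget the four layers as encode/position, route-and-duplicate, selective-product (with gating for powers), and terminal linear combination.

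The hard part will be the exact per-position realization of the selective coefficient $\bar A_t = p_A(S_\Delta \tilde{x}_t)$: since $p_A$ is a fixed low-degree polynomial of a single scalar, forcing it to equal $1$ on skip positions and $x_k$ on selected positions requires designing the positional encoding so that $S_\Delta \tilde{x}_t$ lands in regimes where $p_A$ outputs the intended value, essentially an indicator/interpolation construction over the position tags that must separate the \emph{position} contribution from the \emph{value} contribution inside one polynomial. Establishing this cleanly, together with bounding the required number of channels and amount of padding and checking that the routing does not corrupt the tags, is where the technical effort concentrates; the subset-product readout, the gating-based powers, and the final summation are then routine consequences of the primitives above.
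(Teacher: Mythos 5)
Your overall strategy---duplicate copies of each $x_i$ into padded positions with an $\bar A=1$ cumulative scan, form products via the running-product term $\prod_k \bar A_k$ of a selective scan, dedicate one channel per monomial, and read out the linear combination with the final linear layer---is the same as the paper's, including the four-layer budget. The gap is exactly the step you defer: per-position control of $\bar A_t$ and $\bar B_t$. Your plan is to make $p_A(S_\Delta \tilde x_t)$ equal $1$ on skip positions and $x_k$ on selected positions by routing a positional tag into the argument of $p_A$. This cannot work as stated: $\Delta_t$ is a single scalar, a fixed linear combination of the value and the tag, so you would need one fixed polynomial $p_A$ to satisfy $p_A(\alpha x+\beta\tau_0)=1$ for all $x$ in the data range (skip tag) and $p_A(\alpha x+\beta\tau_1)=x$ for all $x$ (select tag). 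The first identity forces a polynomial to be constant on an interval, hence constant everywhere, contradicting the second. The same obstruction applies to localizing the injection $\bar B_j$ to the single site $\min S$.

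The paper sidesteps this entirely. Rather than making the coefficient maps position-dependent, it uses the gating branch $Y\otimes Z$, the learnable PE, and the biases of the linear layers to rewrite the \emph{sequence} so that skip positions literally contain the value $1$ (e.g.\ $z=(1,\dots,1,x_j,\dots,x_j,1,\dots,1)$ with exactly $P-2$ copies of $x_j$); a uniform selective rule then multiplies by $1$ at skip positions for free. The second issue---that the unrolled scan yields a \emph{sum} over all injection points $j$, not a single product---is resolved by running the same SSM on two copies of the sequence differing by one inserted zero and subtracting the outputs in the block's final linear layer, producing a telescoping series that leaves only the desired term $x_j^P$. You need both devices (value-$1$ masking via gating, and the two-channel telescoping subtraction) to close the step you flagged as hard; the indicator-inside-$p_A$ route should be abandoned. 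With that substitution your construction matches the paper's proof of Lemma~\ref{lemma:3layerMambaExpresivity} and Theorem~\ref{theorem:AnyPolywithMambas}.
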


For simplicity, we assume that the state size $N = 1$ and the SiLU activations in Mamba are removed. We justify these decisions as they only restrict our model. The proof is presented in the A
appendix, and it follows a technical construction that demonstrates how to express a general polynomial using Mamba. The core argument is built on the following lemma, which is 
visualized in Fig.~\ref{fig:exprresTheorem}:

\begin{figure*}[]
    \centering
    \vspace{-4pt}
    \includegraphics[width=0.92\linewidth]{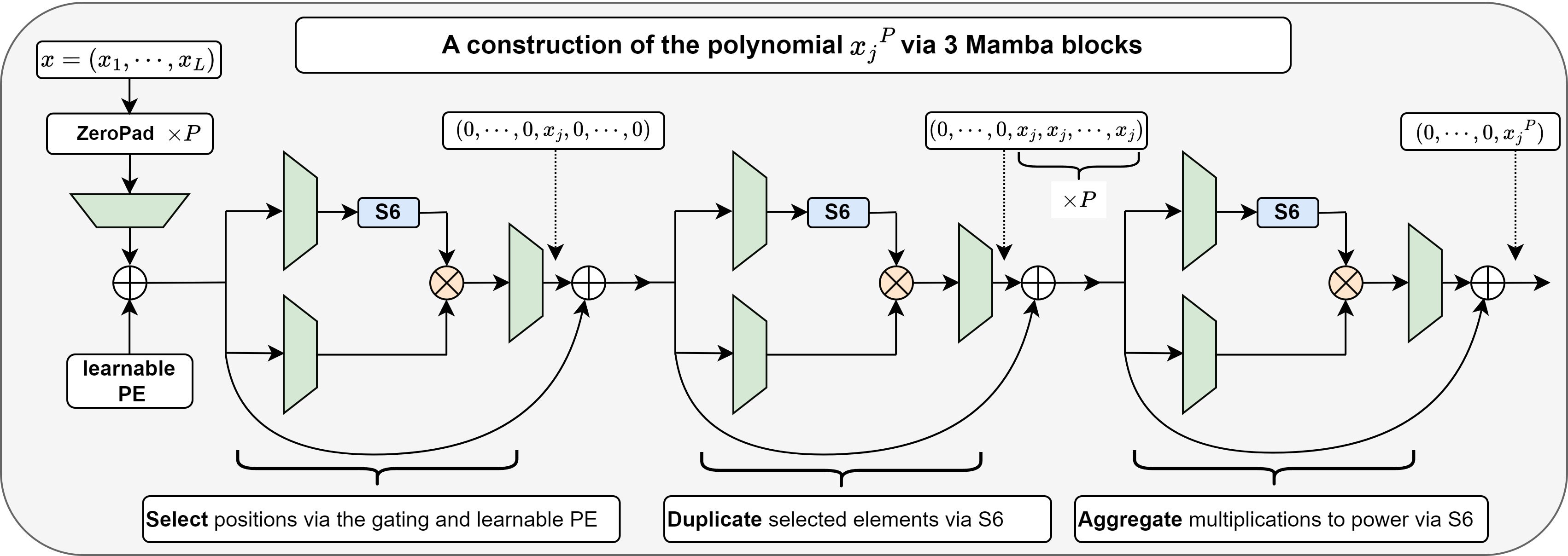}
    \vspace{-7pt}
    \caption{Visualization of 3-stacked Mamba layers expressing monomials of a univariate polynomial, as formulated in Lemma~\ref{lemma:3layerMambaExpresivity}. To simplify the visualization, the Conv1D layer has been omitted.}
    \label{fig:exprresTheorem}
     \vspace{-10pt}
\end{figure*} 

\begin{lemma}\label{lemma:3layerMambaExpresivity}
Given an input scalar sequence $x \in \mathbb{R}^L$, for any $j$, a model $M$ with 3-stacked Mamba layers, a sufficiently large number of channels, learnable PE, and a linear encoder in the first layer can express any monomial of a univariate polynomial in $x_j$. Specifically, for any constants $c \in \mathbb{R}$ and $P \in \mathbb{R}$, there exists a configuration of $M$ such that the output of the $k$-th channel $M(x)_k = c \cdot x_j^P$ for $k > P + j$.
\end{lemma}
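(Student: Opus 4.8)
The plan is to exhibit an explicit configuration of the three Mamba blocks (together with the first-layer linear encoder and the learnable PE) whose output equals $c\,x_j^P$, since producing a witness suffices to prove expressibility. Following the simplifications already introduced, I would work with the $N=1$ S6 of \eqref{eq:simplifiedModel}/\eqref{eq:model}, take $p_A$ linear so that the selective coefficient $\bar A_t$ is a linear functional of the per-position feature vector, and set $\sigma=\mathrm{id}$ (removing SiLU only restricts the block, so a construction in this regime is a valid witness). Two structural tricks drive the whole argument. First, I route all \emph{content} through the selective maps $S_B,S_C,S_\Delta$ rather than through the scalar channel that the scan multiplies: feeding a constant-$1$ sequence as the processed channel collapses the factor $x_t$ in $\bar B_t x_t$ to $1$, so the recurrence does not manufacture spurious powers. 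Second, I take the learnable PE to be (effectively) one-hot in position, so that "inject at position $j$'' and "a window of $P$ positions past $j$'' become \emph{linear} functionals of the feature vector, exactly what $B_t=S_B X_{*t}$ and its analogues permit.

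With these conventions the three layers play the following roles. In Layer~1 I build a \emph{windowed broadcast} of $x_j$: using a hold dynamic $\bar A_t=1$ on a constant baseline channel, I inject $(x_j-1)$ at position $j+1$ and its cancellation $-(x_j-1)$ exactly $P$ positions later (both single-position, hence linear in the one-hot PE). The scan then yields a channel $g$ with $g_t=x_j$ on a window of exactly $P$ positions following $j$ and $g_t=1$ elsewhere; "sufficient padding'' guarantees these $P$ positions exist. In Layer~2 I route $g$ into $S_\Delta$ so that the selective coefficient becomes $\bar A_t=g_t$, inject the constant $c$ at position $j$ through $S_B$, and read out with $C\equiv 1$ on a constant channel. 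Unrolling \eqref{eq:timeVaraintRecRule} gives $h_t=c\prod_{k=j+1}^{t} g_k = c\,x_j^{\min(t-j,\,P)}$; because $g_k=1$ once $k$ leaves the window, the product \emph{saturates}, so $h_t=c\,x_j^{P}$ for every $t\ge j+P$. This already realizes $M(x)_k=c\,x_j^{P}$ for $k>P+j$.

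Layer~3 is the read-out layer: it copies the saturated signal into the designated output channel and absorbs the residual normalization constants produced by the inter-block Linear maps and the gating multiplication $Y\otimes Z$ of \eqref{eq:mamba1} (with $\sigma=\mathrm{id}$ the gate is a plain product, so a constant gate channel preserves the value up to a scalar that folds into $c$). Composing the three blocks yields the claimed monomial, and since $j$, $c$ and $P$ are arbitrary this proves the lemma; it is also the base case from which Theorem~\ref{theorem:AnyPolywithMambas} assembles arbitrary polynomials by summing such monomials across channels.

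I expect the crux to be the simultaneous control of two competing effects. On one hand, the selective recurrence multiplies together inputs from \emph{all} visited positions through $\prod_k \bar A_k$, so I must guarantee that nothing outside the intended window leaks into the output channel; this is precisely why $\bar A_t\equiv 1$ (not $0$) must hold outside the window, and why the \emph{windowed} broadcast of Layer~1, rather than a raw broadcast, is essential. On the other hand, every selected quantity must be a linear functional of the feature vector, and the one-hot PE together with the constant-main-channel trick are what make the indicator-type selections and the degree bookkeeping linear. Verifying that these two requirements can be met at once — in particular that the product saturates at \emph{exactly} $x_j^P$ and that the stray factors of $x_t$ are fully neutralized — is the main technical obstacle.
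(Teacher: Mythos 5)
Your overall strategy matches the paper's: isolate position $j$, create a window of $P$ copies of $x_j$, and exploit the product $\prod_k \bar A_k$ of data-dependent transition coefficients to turn repeated multiplication into the power $x_j^P$. Your Layer~2 is in fact cleaner than the paper's third block: by injecting the seed $c$ at the single position $j$ (via $\bar B_t \propto \mathbb{I}_{t=j}$) you collapse the scan's sum over starting indices to the single term $c\prod_{k=j+1}^{t} g_k$, which saturates at $c\,x_j^P$ once $t$ leaves the window. The paper instead feeds the whole window into the scan, obtains a sum of powers $\sum_r c_r x_j^{r+2}$, and must run two SSM channels on $z$ and a shifted $z'$ and subtract them in the final linear layer to telescope away everything but the top-degree term. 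Your single-injection trick removes that subtraction entirely, which is a genuine simplification.

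There is, however, one step that fails as written: the construction of the windowed broadcast in Layer~1. You propose to inject $(x_j-1)$ at position $j+1$ and its cancellation $-(x_j-1)$ at position $j+P+1$, calling both ``linear in the one-hot PE.'' The injection site is indeed controlled by the PE, but the injected \emph{value} $\bar B_t x_t = (S_B X_{*t})\,x_t$ can only depend on the feature vector at position $t$ itself; at position $j+P+1$ that feature vector contains $x_{j+P+1}$ and the PE, not $x_j$, so the cancelling term $-(x_j-1)$ is not realizable there. The SSM scan is causal and local in its inputs, so you cannot ``look back'' to position $j$ when forming the injection at $j+P+1$ without first transporting $x_j$ there. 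The paper avoids this by doing the windowing multiplicatively rather than by cancellation: it broadcasts $x_j$ forward indefinitely with $\bar A=\bar B=\bar C=1$, then uses the gate branch (fed a position-only mask built from the learnable PE through the linear layers) to zero the broadcast outside the window, and a PE/bias term to restore the baseline value $1$ there. Your construction is repaired by exactly this substitution --- replace the additive cancellation with the PE-driven gate mask, which costs nothing since your Layer~3 is essentially idle --- but as stated the Layer~1 step does not go through.
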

For a detailed proof of Lemma~\ref{lemma:3layerMambaExpresivity}, we refer the reader to the appendix. Here, we provide an intuitive explanation of the proof, which hinges on the following two key capability of the Mamba architecture:

{\noindent\textbf{(i) Per-position selection:}} By utilizing Mamba's auxiliary components, including the gating branch, linear layers, and learnable PE, each Mamba layer can isolate a specific channel $k$ at a given position $j$. Notably, the output of the first Mamba block can effectively filter out all unnecessary positions, producing a sequence mask with zeros at every position except $i = j$, which contains $x_j$ at position $j$. This is done by setting the S6 to the identity function ($\bar{A}=0, \bar{B}=\bar{C}=1)$, ensuring $x_j$ is not modified. Additionally, mask the other positions achieved by set the parameters of a learnable PE at one of the channels to the indicator function $\mathbb{I}_{=j}$, which is 1 only when focusing on the $j$-th position, and ensuring this PE passed into the gate branch through the linear layers.

{\noindent\textbf{(ii) Express powers by aggregate multiplications of duplicated elements:}} Given an input $x = (0, \cdots, x_j, 0, \cdots)$, which can be obtained from the first layer, the second Mamba block can duplicate the value of $x_j$ exactly $P-2$ times. This duplication holds even if $P > L$, thanks to the ZeroPad component. The duplication process is achieved by setting the linear layers to identity mappings and utilizing a degenerate single SSM channel where the system matrices always equal $\bar{A},\bar{B},\bar{C}=1$. Therefore, if the $k$-th channel receives an input sequence $x = (0, \cdots, x_j, 0, \cdots)$, it will output $x = (0, \cdots, x_j, \cdots, x_j)$.
Through the gating mechanism, learnable PE (which can pass through skip-connections to the subsequent layers), and biases in the linear layer, the entire block can then produce a filtered output $z = (1, \cdots, 1, x_j, \cdots, x_j, 1, \cdots, 1)$, ensuring that there are exactly $P-2$ copies of $x_j$.
Next, the S6 layer in the third block can multiply the sequence elements in $z$ via the operation described in Eq.~\ref{eq:SimpleMamba1Chan}, which include the term $\Pi_{k=j+1}^t \bar{A}_k$. This term yields an output sequence with the values $(1, \cdots, 1, x_j^3, \cdots, x_j^P, \cdots)$. To specifically isolate ${x_j}^P$, we begin by generating all unwanted elements by applying the same SSM to the sequence $z$, introducing an additional zero at the initial occurrence of $x_j$ denoted by $z'$. We then subtract the outputs from these two SSM channels at the final linear layer of the block. This subtraction yields a telescoping series $SSM(z) - SSM(z')$:
\vspace{-2pt}
{\small
\begin{equation}
\sum_{j=1}^L c_j \Big{(} \Pi_{k=j+1}^{t-1} x_k \Big{)} {x_t}^2 {x_j}^2 - \sum_{j=2}^L c_j \Big{(} \Pi_{k=j+1}^{t-1} x_k \Big{)} {x_t}^2 {x_j}^2
\vspace{-3pt}
\nonumber
\end{equation}
\vspace{-3pt}
}
that effectively eliminates any term except for ${x_j}^P$. 


Finally, it is crucial to highlight that incorporating auxiliary components into the Transformer, such as learnable PE and gating, does not mitigate the logarithmic increase in depth required with L. This limitation arises because the expressible degree within each block remains unchanged, thereby leading to the observed asymptotic behavior.

\vspace{-4pt}
\subsection{Generalization\label{sec:generalization}}
\vspace{-2pt}
Our theoretical analysis in Sec.~\ref{sec:expressivity} demonstrates the superior expressive power of S6 compared to linear attention, particularly in terms of multivariate polynomial degree and long-range processing capabilities. Furthermore, Thm.~\ref{theorem:AnyPolywithMambas} and Lemma~\ref{lemma:dir2} establishes that the hypothesis class associated with Mamba models with just four layers is significantly larger than that of transformers, even with greater depth. While increased expressivity can often come at the cost of generalization, in this section, we show that S6 layers do not suffer from this trade-off. To do so, we %
provide a generalization bound for simplified S6 layers, as {\color{black}defined in Eq.~\ref{eq:modelAppendix}, which is a generalization of Eq.~\ref{eq:model}}. Our analysis is based on a classifier \( f : \mathbb{R}^{D \times L} \rightarrow \mathbb{R}^{\mathcal{C}} \), parameterized by  \( (A^{}, S_B^{}, S_C^{}, S_{\Delta}^{}) \).
The classifier \( f \) for a specific class \( c \in [\mathcal{C}] \) denoted by $f^c(X_{*1},...,X_{*L})$ is computed as:
%
\vspace{-3pt}
{\small
\begin{equation}
\sum_{d=1}^D W_{c,d} \left(S_C^{} X^{}_{*L}\right)^T \sum_{i=1}^L \left(\prod_{k=i+1}^L \bar{A}^{}_{dk} \right) S_B^{} X^{}_{*i} X^{}_{di}
\end{equation}
}
%
where $\mathcal{C}$ is the number of classes, $ \bar{A}^{}_{dk} = \exp(\Delta_k A_d)$ ($A_d$ is the $d$th row of $A$) as defined in Eq.~\ref{eq:TimeVariantMatrices1} and $W \in \mathbb{R}^{\mathcal{C} \times D}$ represents a linear projection from the output to the number of classes.
We denote the parameters of a classifier by 
$w  = \left(A, S_B, S_C, S_{\Delta}, W\right)$
and the corresponding function induced by a specific instance of $w$ by $f_w$.
The class of functions taking on different parameter instances $w$, is denoted by $\mathcal{F}$.
As customary in the derivation of Rademacher complexity based bounds (e.g. \citep{golowich2018size}), our analysis takes into account the (different) norms of the parameters, for which we 
denote:
\vspace{-3pt}
{\small
\begin{equation}\nonumber
\begin{aligned}
    \rho_W(w)&=||W||_{F},\; \rho_A^{}(w) = \|A^{}\|_{\max},\; \rho_B^{}(w) = \|S_B^{}\|_{2,\infty}, \\ \rho_C^{}(w) &= \|S_C^{(h)}\|_{F},\;
    \rho_{\Delta}^{}(w) = \|S_{\Delta}^{}\|_{2}, \\ \Gamma(w) &=  \rho_W(w) \cdot \rho_A^{}(w) \cdot \rho_B^{}(w) \cdot \rho_C^{}(w) \cdot \rho_{\Delta}^{}(w)
\end{aligned}
\end{equation}
}
Equipped with these notations, we are now ready to state our main generalization bound.
 \begin{theorem}\label{theorem:genbound}
Let $P$ be a distribution over $\mathbb{R}^{D \times L} \times [C]$ and $\delta > 0$. Let $S = \{( X^{}_{(j)},y_{(j)})\}^{m}_{j=1}$ be a dataset of i.i.d. samples selected from $P$, where each \(X_{(j)} = (X_{(j)_{*1}}, \ldots, X_{(j)_{*L}}) \in \mathbb{R}^{D \times L}\). Assume that $\forall j \in [m]: ||X_{(j)}||_{\max} \leq 1$. Additionally, suppose $\forall k \in [L], d \in [D]:||\bar{A}^{}_{dk}||_{\max} < K < 1$. Then, with probability at least $1-\delta$ over the selection of $S$, for any $f_w \in \mathcal{F}$, 
{\small
\begin{equation*}
\begin{aligned}
&\err_P(f_w) - \fr{1}{m}\sum^{m}_{j=1}\bI[\max_{c \neq c'}(f^c_w( X_{(j)})) + \gamma \geq f^{c'}_w( X^{}_{(j)})] \\& = \err_P(f_w) - \err^\gamma_S(f_w) \leq \frac{2\sqrt{2}}{\gamma m} ({\Gamma(w) } +{\frac{1}{D^2N^2}}) D^{2} \cdot \\& (1 + \sqrt{2\log (4L \mathcal{C} D^4 N)}) \sqrt{\max_{t, k} \sum_{j=1}^m (X_{(j)_{tk}})^2} \frac{K}{(K-1)^2} \\&+ 3\sqrt{\frac{\log(2/\delta)+2\log({D^2N^2\Gamma(w) }+2)}{2m}},
\end{aligned}
\end{equation*}
}
where the maximum is taken over \(t \in [D]\),  \(k \in [L]\). 
\end{theorem}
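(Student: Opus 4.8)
The plan is to obtain a margin-based generalization bound by controlling the empirical Rademacher complexity $\mathcal{R}_S(\mathcal{F})$ of the S6 function class, and then to bound that complexity by exploiting the explicit monomial (multilinear) structure of the unrolled recurrence together with the contractivity assumption $\bar{A}_{dk} < K < 1$. First I would invoke the standard margin-based bound for multiclass classification (e.g.\ the norm-based Rademacher analysis of \citep{golowich2018size}), replacing the margin indicator by a $\tfrac{1}{\gamma}$-Lipschitz ramp surrogate and removing it with Talagrand's contraction lemma. This reduces the problem to upper bounding $\mathcal{R}_S(\mathcal{F})$ and contributes the usual confidence term $3\sqrt{\log(2/\delta)/2m}$.

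The core of the argument is the bound on $\mathcal{R}_S(\mathcal{F})$. I would expand $f^c_w(X)$ as a linear combination of finitely many \emph{fixed} monomial features of the input — indexed by the channel $d\in[D]$, the source position $i\in[L]$, the $N$ hidden coordinates, and the class — whose coefficients are products of entries of $W$, $S_C$, $S_B$ and the transition weights. The total $\ell_1$ mass of these coefficients is governed by $\Gamma(w)=\rho_W(w)\rho_A(w)\rho_B(w)\rho_C(w)\rho_\Delta(w)$ (up to the geometric factor discussed below), so each $f^c_w$ is an $\ell_1$-norm-bounded linear predictor over these atomic features. Applying the standard Rademacher bound for $\ell_1$-bounded linear predictors — a Massart-type maximal inequality — then produces the data term $\sqrt{\max_{t,k}\sum_{j=1}^m (X_{(j)_{tk}})^2}$, the factor $(1+\sqrt{2\log(4L\mathcal{C}D^4N)})$ from the union bound over the $O(L\mathcal{C}D^4N)$ atoms, and the dimensional factor $D^2$ from the bookkeeping over channels and class coordinates.

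The main obstacle is controlling the transition product $\prod_{k=i+1}^L \bar{A}_{dk}$, which couples all positions multiplicatively and depends nonlinearly on the input through $\Delta_k=\mathrm{softplus}(S_\Delta X_{*k})$ and $\bar{A}_{dk}=\exp(\Delta_k A_d)$; unlike linear attention this forbids a direct position-wise linear decomposition. Here I would use $\bar{A}_{dk} < K < 1$: each product is bounded by $K^{L-i}$, and since it equals $\exp\!\big(A_d\sum_{k=i+1}^L \Delta_k\big)$ it involves $L-i$ copies of the timescale parameters, so controlling its sensitivity to $S_\Delta$ carries an extra factor $(L-i)$. Summing the geometrically weighted positional contributions gives $\sum_i (L-i)K^{L-i}\le \sum_{r\ge 1} rK^{r}=\tfrac{K}{(1-K)^2}=\tfrac{K}{(K-1)^2}$, which is exactly the factor appearing in the bound. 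This contractivity is what makes the result length-agnostic, as the positional sums converge to a constant independent of $L$.

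Finally, because the leading term scales with $\Gamma(w)$, which is itself parameter dependent, no single uniform bound over $\mathcal{F}$ applies. I would close the argument with a standard peeling (covering) argument over a geometric grid of norm scales $\{w:\Gamma(w)\le B_t\}$, allocating confidence $\delta_t$ with $\sum_t\delta_t\le\delta$ and matching each $w$ to its scale; rounding $\Gamma(w)$ up to the nearest grid point introduces the additive slack $\tfrac{1}{D^2N^2}$ inside the leading factor, and paying $\log$ of the scale index yields the $2\log(D^2N^2\Gamma(w)+2)$ term inside the confidence bound. I expect the transition-product control to be the delicate step, since it is precisely where the $K<1$ assumption must convert an $L$-fold multiplicative coupling into the $L$-independent constant $\tfrac{K}{(K-1)^2}$.
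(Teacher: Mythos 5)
Your overall architecture matches the paper's proof: the paper also starts from the margin-based Rademacher bound (its Lemma~\ref{lem:loss_ramp}), reduces everything to bounding the Rademacher complexity of a norm-constrained subclass $\mathcal{F}_\rho$ (Theorem~\ref{theorem:rad}), obtains the $\frac{K}{(K-1)^2}$ factor from exactly the geometric sum $\sum_i (L-i)K^{L-i}$ you write down, and finishes with the same peeling over a geometric grid of $\Gamma$-scales with confidence $\delta/(t(t+1))$, which is where the $+\frac{1}{D^2N^2}$ slack and the $2\log(D^2N^2\Gamma(w)+2)$ term come from. Your identification of the key quantitative ingredients ($\Gamma(w)$, $D^2$, the $\sqrt{\log(L\mathcal{C}D^4N)}$ union-bound factor, the data term, and the length-independence via $K<1$) is accurate.

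The one place where your proposal as written would not go through is the framing of $f^c_w$ as an $\ell_1$-bounded linear predictor over \emph{fixed} monomial features, to which a Massart-type maximal inequality is applied. The transition product $\prod_{k=i+1}^L \bar{A}_{dk} = \exp\bigl(\sum_{k=i+1}^L \sigma(S_\Delta X_{*k})\, a_{dl}\bigr)$ is not a fixed feature with a parameter-dependent coefficient: it is a nonlinear, parameter-dependent function of the data, so the hypothesis class is not a ball in any fixed feature space and Massart's lemma does not apply directly. You correctly flag this as the main obstacle, but your resolution (``controlling its sensitivity to $S_\Delta$ carries an extra factor $(L-i)$'') skips the actual device the paper uses: a Ledoux--Talagrand-style contraction lemma for Lipschitz, positive-homogeneous compositions (the paper's Lemma~\ref{lem:peeling}), applied \emph{inside} an $\exp(\lambda_i\,\cdot)$ moment-generating function after a Jensen $\frac{1}{\lambda_i}\log\mathbb{E}\exp$ step and a union bound over the finitely many indices $(c,d,l,s,s')$. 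This strips the exponential down to the linear quantity $\sum_j \xi_{jc} S_\Delta X_{(j)_{*k}}$ with Lipschitz constant $l_i \le K^{L-i}$, after which a bounded-differences/sub-Gaussian argument (the technique of Theorem~1 in \citep{golowich2018size}) and an optimal choice of $\lambda_i$ yield the $(1+\sqrt{2\log(4L\mathcal{C}D^4N)})\sqrt{\max_{t,k}\sum_j (X_{(j)_{tk}})^2}$ term. Your $(L-i)$ factor arises there from bounding $\bigl|\sum_{k=i+1}^L \sigma(S_\Delta X_{(j)_{*k}})\bigr|$ by $(L-i)$ times a single-term supremum, not from a sensitivity computation. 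So the plan is essentially the paper's, but the central contraction-through-the-exponential step needs to be made explicit for the argument to be complete.
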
 
See Appendix~\ref{theorem:genproof} for the complete details and proof. A uniform convergence bound for a class $\mathcal{F}$ is an upper bound on the generalization gap that uniformly holds for all $f \in \mathcal{F}$. The more direct Rademacher complexity bound presented in  Lemma~\ref{lem:loss_ramp} 
is an example of a uniform convergence bound. However, these bounds become ineffective when a function $f \in \mathcal{F}$ exists that can perfectly fit any labeling of the dataset. In such situations, uniform convergence bounds fail to provide meaningful insights and are considered vacuous.

In contrast, the bound derived in Thm.~\ref{theorem:genbound} is also based on Rademacher complexity, but it is not a uniform convergence bound and hence it is not inherently vacuous. In the proof, we partition the class $\mathcal{F}$ into subsets $\mathcal{F}_{\rho}$, where the partitioning criterion $\rho$ is based on the norm of the S6 matrices, and apply our Rademacher bound (see Appendix \ref{theorem:rad}) 
within each subset. We then apply a union bound to combine these results, obtaining a bound 
individualized for each $f_w \in \mathcal{F}$. 


To understand our bound, we analyze its terms. 
First note that from the standard assumption that the data is normalized, we have,
\vspace{-8pt}
{\small
\begin{equation}
\sqrt{ \max_{t,k} \sum_{j=1}^m (X_{(j)_{tk}})^2} \leq \sqrt{m}   
\vspace{-2pt}
\end{equation}
}
%
%
Regarding the term \(\sqrt{\log(\mathcal{C}DNL)}\), even when choosing exceptionally large values for \(D\) or \(L\), such as \(2^{100}\), the impact on the bound remains minimal. The second term in the bound (see Thm~\ref{theorem:genbound}) is typically smaller than the first term, as it scales with \(\sqrt{\log(DN\Gamma(w))/m}\). Therefore, we conclude that our generalization bound scales with \(\mathcal{O}\left(\frac{1}{\sqrt{m}}D^{2} \Gamma(w) \cdot  \frac{K}{(K-1)^2}\right)\). This implies that the bound is largely unaffected by the sequence length \(L\), making it applicable to various sequence lengths without being significantly impacted by them. Note that since \( K < 1 \), when \( K \) is small, the term \(\frac{1}{1-K}\) approaches 1, further reducing its impact on the bound. This implies that for very small \( K \), the generalization bound becomes even tighter.
%

\vspace{-7pt}
\section{Experiments}\label{sec:experiments}
\vspace{-4pt}
In this section, we extensively validate our theorems and assumptions through empirical analysis. First, in Sec.\ref{sec:modelJustification},
we demonstrate that our simplified variant of the Mamba layer achieves performance comparable to the original Mamba layer when incorporated into standard settings and deep networks, thereby justifying the exploration of this variant. Next, in Sec.\ref{sec:LearningPoly},
we validate our theory on expressiveness by showing that self-attention struggles to learn high-degree multivariate polynomials, which S6 can model effectively. 

\vspace{-5pt}
\subsection{Model Justification\label{sec:modelJustification}}
\vspace{-3pt}
Our theoretical study employs the simplified S6 variant described in Eq.~\ref{eq:simplifiedModel}. We conduct experiments in both the NLP and vision domains, evaluating this variant when integrated into the Mamba backbone, with the goal of showing that it performs similarly to the original S6 layer.

{\noindent\textbf{NLP\quad}}
In NLP, we trained variants of our simplified S6 layer within Mamba backbones on the Wikitext-103 dataset using a self-supervised scheme for \textit{Next Token Prediction}. Our models feature 12 layers with a hidden dimension size of 386 and were trained with a context length of 1024 tokens. The final results are detailed in right panel of Tab.~\ref{tab:empricalModelJustifiction} and in the right panel of Fig.~\ref{fig:modelJustifications}, which illustrates the evolution of test-perplexity across epochs. Evidently, our simplified S6 variant performs well in the NLP domain, with only a slight reduction in perplexity with respect to the original model. Specifically, the polynomial variant achieved a perplexity score of 26.42, 0.69 points lower than its original baseline score of 25.73. In contrast to the polynomial variant, the other simplified variants that employ $\bar{B}_i = B_i$ achieve a slightly lower perplexity 
compared to the baseline, highlighting the significance of this aspect in the architecture.

{\noindent\textbf{Vision\quad}} %
Image classification experiments are conducted on the ImageNet-100 benchmark. We built upon the Vision-Mamba (ViM) architecture~\cite{mambaViT1}, replacing the S6 layers with our simplified variant while maintaining the same training procedures and hyper-parameters. The left panel of Tab.~\ref{tab:empricalModelJustifiction} presents the results: the simplified variant from Eq.~\ref{eq:simplifiedModel} achieves a 
accuracy of 78.62\%, which is 2.4\% lower than the original model which achieve a score of $81.02$. For reference, we include the results of DeiT~\cite{touvron2021training}, which achieved a top-1 accuracy of 78.21\% for the same model size
~\cite{baron2024a}. %
%
%

\begin{table*}[]
    \centering
    \small
    \vspace{-11pt}
    \caption{Ablations of our simplified S6 variants, with vision tasks on the left and NLP on the right. 'S' for simplified variants. Results for Transformer models are provided as a reference point.}
    \smallskip
    \label{tab:empricalModelJustifiction}
    \begin{tabular*}{0.48\linewidth}{@{\extracolsep{\fill}}lcc}
        \toprule
        Model & Top-1 & \# Parameters \\
        \midrule
        ViM (baseline) & 81.02 & 6.2M  \\
        ViM (S., $\bar{B}_i = B_i$) & 79.36 & 6.2M\\
        ViM (S., $\bar{A}_i = p_A(S_{\Delta}(x_i))$) & 80.28 & 6.2M \\
        ViM (S., Eq.~\ref{eq:simplifiedModel}) & 78.62 & 6.2M  \\
        Transformer (DeiT) & 78.21 & 6.2M  \\
        \bottomrule
    \end{tabular*}
    \hfill
    \begin{tabular*}{0.49\linewidth}{@{\extracolsep{\fill}}lcc}
        \toprule
        Model & PPL & \# Parameters \\
        \midrule
        Mamba (baseline) & 25.73 & 30.1M  \\
        Mamba (S., $\bar{B}_i = B_i$) & 29.49 & 30.1M \\
        Mamba (S., $\bar{A}_i = p_A(S_{\Delta}(x_i))$) & 26.42 & 30.1M \\
        Mamba (S., Eq.~\ref{eq:simplifiedModel}) & 31.12 & 30.1M  \\
        Transformer & 28.31 & 31.4M  \\
        \bottomrule
    \end{tabular*}
    \vspace{-7pt}
\end{table*}

To identify which aspects of our simplification most significantly impact performance, we compare two additional variations. First, we use a polynomial S6 variant without omitting the discretization. Second, we run a vanilla non-polynomial model where be $\bar{B}_i = B_i$. Our empirical analysis reveals that the polynomial model performs remarkably well, achieving an accuracy score of 80.28, just 0.74 points below the original model and 0.92 points above the non-polynomial simplified model. To provide a comprehensive view, the training curves are presented in left panel of Fig.~\ref{fig:modelJustifications}, which also empirically analyzes several variants of the simplified model compared to the baseline. The full set of hyper-parameters can be found in the Appendix at Tab.~\ref{tab:Vsionhyperpams}.

\begin{figure}[t]
\vspace{-6pt}
\centering
\begin{tabular}{cc}
    \includegraphics[width=0.225\textwidth]{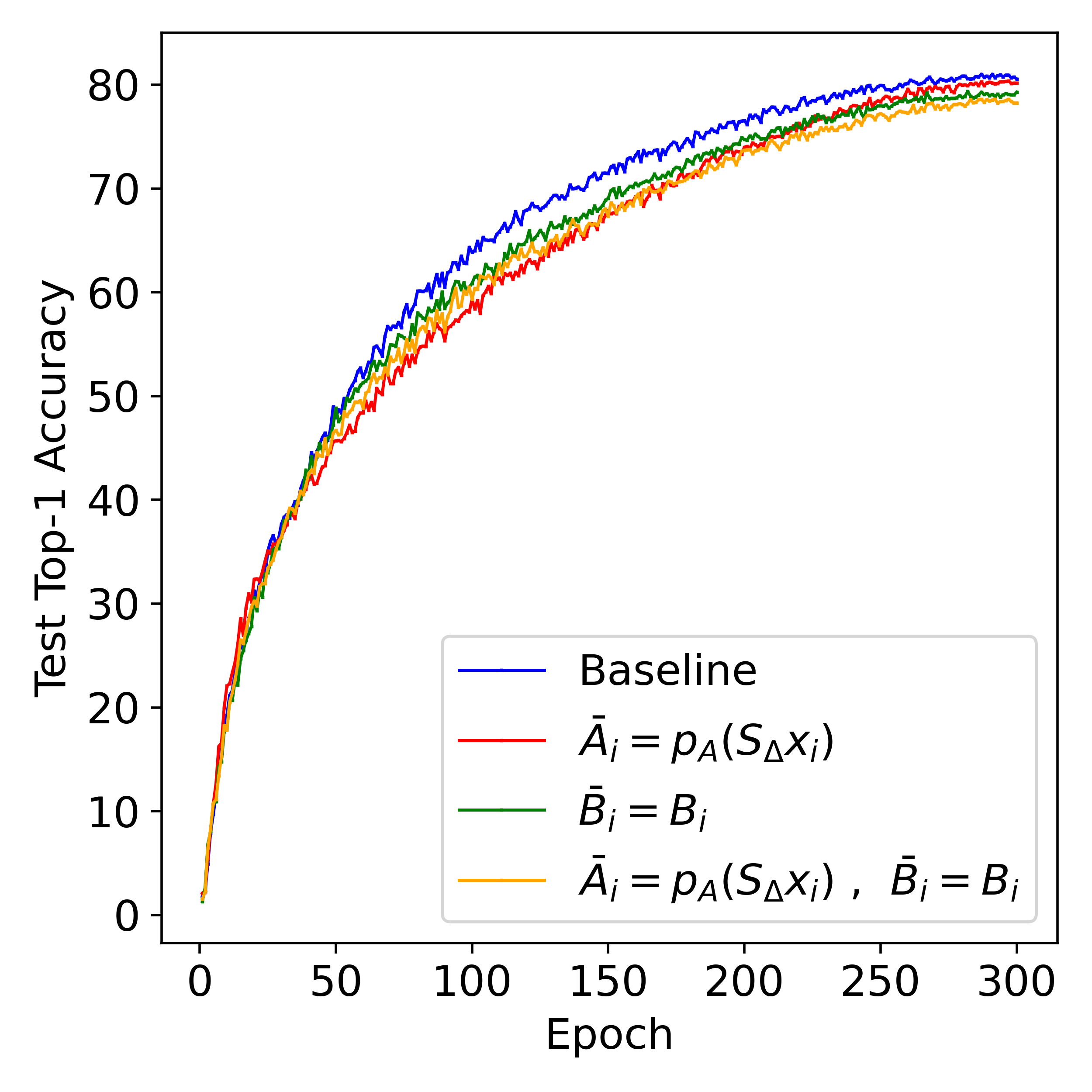} &
    \includegraphics[width=0.225\textwidth]{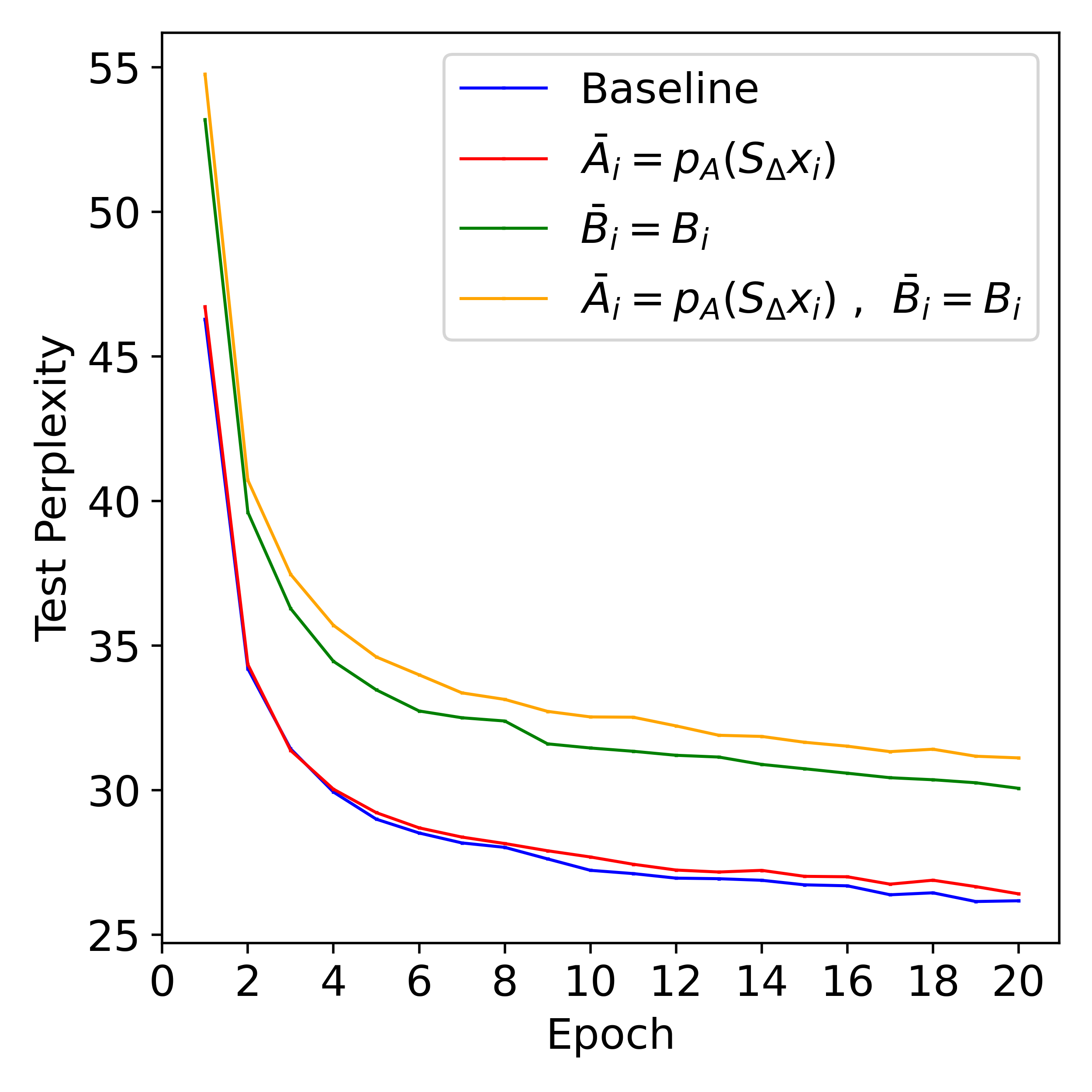} \\
\end{tabular}
\vspace{-8pt}
\caption{\textbf{Model justifications \& ablations: }In the \textbf{left} panel, we present the top-1 accuracy score
for image classification via the ImageNet-100 benchmark, while the \textbf{right} panel displays the perplexity score
for language modeling using the WikiText-103. The y-axis represents the model's score across different epochs. In both figures, the blue curve represents the baseline, the yellow curve corresponds to Eq.\ref{eq:simplifiedModel}, the green curve illustrates Eq.\ref{eq:model}, and the red curve depicts the polynomial variant using standard discretization.
}
\vspace{-9pt}
\label{fig:modelJustifications}
\end{figure}

\vspace{-4pt}
\subsection{Learning Polynomials\label{sec:LearningPoly}}
\vspace{-2pt}

In this section, we empirically validate our theoretical claims concerning the expressiveness of S6 and self-attention layers from Thms.~\ref{theorem:exprrsFull} and ~\ref{theorem:AnyPolywithMambas}. Since our analysis of the expressiveness gap between those layers relies on their characterization via multivariate polynomials, we focus on learning such functions over synthetic data. To isolate factors other than expressiveness, we employ a control setup with small NNs, comprising up to four layers with narrow widths (2 or 4 channels) and an additional output linear projection head. For each architecture, we used the standard implementations: (i) self-attention with softmax and positional encoding, and (ii) the original S6 architecture~\cite{gu2023mamba} with and without PE. The experiments examine the clean implementation of these components, without additional elements such as Conv1D, activations, or FFNs. We conduct experiments on two tasks: classification and regression. 

{\noindent\textbf{Classification}\quad} Our dataset consists of binary random sequences 
of length $L=20$. The labels are uniformly distributed between 0 and $L$, determined using the ``Count in Row'' function~\cite{ali2024hidden}, defined as follows:
    \begin{definition}
        The count in row problem: Given a binary sequence $x_1, x_2, \ldots x_L \in  \{0,1\}^L$ such that $x_i \in \{0,1\}$ 
        , the ``count in row'' function $f$ is defined to produce an output sequence $y_1, y_2, \ldots, y_L$, where each $y_i = f(x_1, .. , x_i)$ is determined based on the contiguous subsequence of 1s to which $x_i$ belongs. Formally: 
        \vspace{-7pt}
        {\small
        \begin{equation}
        y_i = \max_{0\leq j \leq i} \Big{(} \{ i-j+1 \mid \prod_{k=j}^i [x_k > 0] = 1\} \cup \{ 0\} \Big{)}
        \end{equation}
        }
        %
        where $[x_k > 0]$ is the Iverson bracket, equaling 1 if $x_k> 0$ and 0 otherwise.
    \end{definition}
    \vspace{-5pt}
    
The top part in Tab.~\ref{tab:EmpricalExpressClassification} presents the results. Remarkably, even a single layer of selective SSMs, both with and without PE, outperforms attention models with double the number of layers and channels, all while utilizing significantly fewer parameters, as suggested by Thm~\ref{theorem:exprrsFull}.


\begin{table}[t]
\vspace{-8pt}
\caption{\small\textbf{Learning multivariate polynomials over synthetic data.} Classification results are presented in the top, while regression results are displayed on the bottom. Best results for each model depth in bold. 
'$D$' for the number of channels.}
\smallskip
\small
\label{tab:EmpricalExpressClassification}
\centering
\begin{tabular*}{\linewidth}{@{\extracolsep{\fill}}lccc}
\toprule
\multicolumn{4}{c}{Classification}\\
\toprule
 Model &   \# Layers &  Accuracy  &  \# Parameters \\
\midrule
S6 w/ PE $(D=2)$ & 1  & 83.1 & 35 \\
S6 w/o PE $(D=2)$ & 1 & \textbf{84.8} & 35 \\
S6 w/ PE $(D=2)$ & 2 & 93.4 & 63 \\
S6 w/o PE $(D=2)$ & 2 & \textbf{97.1} & 63 \\
\midrule
Self-Attention$(D=2)$ & 1 & 32.9 & 29 \\
Self-Attention$(D=2)$ & 2 & 41.1 & 51 \\
Self-Attention$(D=2)$ & 4 & 44.8 & 95 \\
Self-Attention$(D=4)$ & 1 & 36.2 & 176 \\
Self-Attention$(D=4)$ & 2 & 44.2 & 244 \\
Self-Attention$(D=4)$ & 4 & 55.8 & 380 \\
\bottomrule
\end{tabular*}
\hfill
\label{tab:CelebA}
\centering
\begin{tabular*}{\linewidth}{@{\extracolsep{\fill}}llcc}
\toprule
\multicolumn{4}{c}{Regression}\\
\toprule
 Model &   D  &  MSE  & \# Parameters \\
\midrule
S6 w/ PE & 4 & 12.67  & 101 \\
S6 w/o PE & 4 & \textbf{11.81} & 101   \\
S6 w/ PE & 6 & 12.45  & 157 \\
S6 w/o PE & 6 & \textbf{11.04} & 157   \\
S6 w/ PE & 8 & 12.17  & 377 \\
S6 w/o PE & 8 & \textbf{9.057} & 377 \\
\midrule
Self-Attention & 4 & 19.22   & 81\\
Self-Attention & 6 & 19.10   & 157\\
Self-Attention & 8 & 19.048  & 257\\
\toprule
\end{tabular*}
\vspace{-19pt}
\end{table}

{\noindent\textbf{Regression}\quad} We synthetically construct the dataset \( S = \{(x_i, y_i)\}_{i = 1}^m \)
by first randomly selecting a polynomial denoted by P. For each example in the dataset, we then generate $x$ values uniformly at random and compute the corresponding labels using this P.
{\small
\begin{equation}
\vspace{-3pt}
   c_i \sim \mathbb{U}([-2,2]), \text{  } p_{i,j}\sim \mathbb{U}([ L ]), \text{  } x_j \sim \mathbb{U}([0.1,2])
   \vspace{-4pt}
\end{equation}
}
\vspace{-6pt}
{\small
\begin{equation}\label{eq:randomPoly}
\vspace{-4pt}
   y = P(x) = \sum_{i = 1}^3 c_i \Pi_{j=1}^L {x_j}^{p_{i,j}}
\end{equation}
}

Our models consist of a single layer with either 4 or 8 channels, processing sequences of length $L=5$. As demonstrated in the bottom part of Tab.~\ref{tab:EmpricalExpressClassification}, both S6 variants, with and without PE, significantly outperform traditional self-attention layers across both model sizes. For example, while a self-attention model with 8 channels obtains an MSE score of 19.05, all S6 variants achieve an MSE below 12.67. These experiments demonstrate that at least in controlled environments with small models, S6 layers outperform traditional self-attention layers in approximating polynomials where the total multivariate degree exceeds the sequence length.
%
%
%
%
%
%
%

\vspace{-6pt}
\section{Discussion}
\vspace{-2pt}
To understand the implications of our theory, we first explain why analyzing Softmax-free attention is realistic. Then, we discuss the consequences for standard attention models.

\noindent\textbf{Transformers Without Softmax\quad} The softmax function is primarily associated with optimization and stability, as it normalizes attention scores to the [0, 1] range and prevents numerical instabilities. However, transformer variants without softmax have proven effective in several domains, including reducing latency~\cite{hua2022transformer,lu2021soft,ramapuram2024theory} and in applications such as vision~\cite{wortsman2023replacing}, NLP~\cite{ma2022mega}, and other areas~\cite{zimerman2023converting}. {\color{black}Additionally, these models have recently become even more practical, as researchers have successfully scaled linear attention far beyond 7B parameters~\cite{li2025minimax}, enabling LLMs to extend their context window to 4 million tokens while matching the performance of GPT-4o and Claude-3.5 Sonnet. Several additional linear attention-based LLMs were presented in~\cite{shen2024scaling, qin2023transnormerllm, sun2023retentive}.}
Since these models achieve near-SoTA, focusing on a softmax-free attention model is well justified.

\noindent\textbf{Transformers With Softmax\quad} While the softmax function can theoretically be expressed as an infinite-degree polynomial, we provide careful considerations. The softmax function involves both exponentiation and proportional normalization. The former can be well approximated using low-degree polynomials~\cite{zhang2024secure}, while the latter primarily serves to normalize the scores. We refine our assumption by analyzing transformers that apply exponentiation to each attention score, assuming this can be approximated by a polynomial of degree P. The resulting model expresses higher-degree polynomials within each layer, but it remains based on pairwise interactions via Key, Query, and Values, leading to an maximal polynomial degree of $3P+1$, independent of the sequence length $L$. This supports the validity of our argument in more common regimes.

%
\noindent\textbf{Interpretation and Intuition}\quad}
Our characterization of S6 layers through the lens of 
polynomials offers a novel perspective on the semantic capabilities of Mamba. Specifically, we extend the concept of polynomial degree to quantify the number of tokens involved in each interaction within a layer of an model. For instance, low-degree polynomials correspond to interactions involving only a few tokens, while high-degree polynomials represent dependencies spanning many tokens. This analysis highlights the unique strength of S6 layers in modeling continuous, multi-token interactions, such as counting and recurrent operations. In contrast, transformers, 
are naturally biased toward sparser and more fragmented representations such as induction heads. 

This perspective can also shed light on the remarkable performance of hybrid models that combine modern RNNs and attention by leveraging their complementary strengths~\cite{lieber2024jamba,de2024griffin}. While a formal characterization of their trade-offs is yet to be established, our analysis suggests that S6 and attention capture fundamentally distinct types of interactions, characterized by the number of tokens involved in each interaction.

\vspace{-8pt}
\section{Conclusions}
\vspace{-3pt}
This study explores the expressivity of Mamba models. By reducing the S6 layer to a polynomial form and composing an associated theory, we have established a novel connection between S6 layers and high-degree multivariate polynomials. This connection enables us to identify the expressivity gap between S6 layers and attention mechanisms comprehensively. {\color{black}We show that although the S6 layer has better theoretical expressivity than linear attention for long sequences, this does not negatively impact generalization. We provide a length-agnostic generalization bound to support this result, allowing us to conclude that the S6 layer has superior theoretical properties compared to linear attention for long-range tasks.} 
Finally, the limitations of our work are discussed in Appendix~\ref{sec:limitations}.
%
%
%
\newpage


\section{Acknowledgments}
This work was supported by a grant from the Tel Aviv University Center for AI
and Data Science (TAD) and the Ministry of
Innovation, Science \& Technology ,Israel (1001576154) and the Michael J. Fox
Foundation (MJFF-022407). This research was also supported by the European Research Council (ERC) under the European Unions Horizon 2020 research and innovation programme (grant ERC HOLI 819080).

\nocite{langley00}

\bibliography{example_paper}
\bibliographystyle{icml2025}

\newpage
\appendix
\onecolumn


\section{Experimental Setup \label{sec:hyperParams}} 
All training experiments were performed
on public datasets using a single A100 40GB GPU for a
maximum of two days. All experiments were conducted using PyTorch, and results are averaged over three seeds. All hyperparameters are detailed in Tab.~\ref{tab:NLPhyperpams} and Tab.~\ref{tab:Vsionhyperpams}.

\begin{table}[h]
\centering
\small
\begin{tabular}{l c}
\toprule
\textbf{Parameter} & \textbf{Value} \\
\midrule
Model-width & 192 \\
Number of layers & 24 \\
Number of patches & 196 \\
Batch-size & 512 \\
Optimizer & AdamW \\
Momentum & \( \beta_1, \beta_2 = 0.9, 0.999 \) \\
Base learning rate & $5e-4$ \\
Weight decay & 0.1 \\
Dropout & 0 \\
Training epochs & 300 \\
Learning rate schedule & cosine decay \\
Warmup epochs & 5 \\
Warmup schedule & linear \\ 
Degree of Taylor approx. (Eq.~\ref{eq:simplifiedModel}) & 3 \\
\bottomrule
\end{tabular}
\caption{Hyperparameters for image-classification via Vision Mamba variants} 
\label{tab:Vsionhyperpams}
\end{table}

\begin{table}[h]
\centering
\small
\begin{tabular}{l c}
\toprule
\textbf{Parameter} & \textbf{Value} \\
\midrule
Model-width & 386 \\
Number of layers & 12 \\
Context-length (training) & 1024 \\
Batch-size & 32 \\
Optimizer & AdamW \\
Momentum & \( \beta_1, \beta_2 = 0.9, 0.999 \) \\
Base learning rate & $1.5e-3$ \\
Weight decay & 0.01 \\
Dropout & 0 \\
Training epochs & 20 \\
Learning rate schedule & cosine decay  \\
Warmup epochs & 1  \\
Warmup schedule & linear  \\ 
Degree of Taylor approx. (Eq.~\ref{eq:simplifiedModel}) &  3\\
\bottomrule
\end{tabular}
\caption{Hyperparameters for language modeling via Mamba-based LMs} 
\label{tab:NLPhyperpams}
\end{table}

\section{Proofs}\label{app:proofs}
This section details our proofs.
\subsection{Expressivity}
\setcounter{section}{2}
\setcounter{theorem}{0}
\setcounter{lemma}{0}

\setcounter{section}{1}
\begin{lemma}\label{lemma:dir2appendix}
There exists a function \( f : \mathbb{R}^L \rightarrow \mathbb{R} \) that can be realized by one channel of S6 such that a single attention head would require at least  \( O(\log(L)) \) layers to express this function.
\end{lemma}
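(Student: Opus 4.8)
The plan is to prove the separation exactly as in the softmax-free sketch of Lemma~\ref{lemma:dir2}: translate both architectures into multivariate polynomials and compare their maximal total degrees, but now carrying the argument through the softmax nonlinearity. First I would exhibit the witnessing function $f$. Unrolling a single channel of the simplified S6 layer (Eq.~\ref{eq:simplifiedModel}) as in Eq.~\ref{eq:unrolledChanSl} expresses the last output $y_L$ as a multivariate polynomial in $(x_1,\dots,x_L)$. The key observation is that the summand indexed by $j=1$, namely $c_1 \big(\prod_{k=2}^{L-1} x_k\big) x_L^2 x_1^2$, contributes a monomial whose total degree grows linearly in $L$ (the $L+3$ bound already recorded in the sketch of Lemma~\ref{lemma:dir2}). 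Choosing the S6 parameters so that the leading coefficient $c_1 = S_C S_B S_\Delta^{L-2} \neq 0$ guarantees this top-degree monomial survives, so $f := y_L$ is a polynomial realized by one S6 channel whose total degree is $\Theta(L)$.

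Next I would bound the degree realizable by a single attention head. In the purely linear case one head computes $y_L = W_Q W_K W_V \sum_j x_j^2 x_L$, a degree-$3$ polynomial. To absorb the softmax, I would adopt the regime justified in the Discussion: approximate the exponential in the attention scores by a degree-$P$ polynomial. Each score is then a polynomial of bounded degree, and the head's output, being a value-weighted combination of such scores, remains (after clearing the normalizing denominator) a polynomial of total degree at most a constant $d = 3P+1$ that carries no dependence on $L$, because the pairwise Key/Query/Value structure introduces none. I would then bound $N$-stacked heads by induction on depth, using the elementary fact that composing a degree-$d_1$ polynomial map with a degree-$d_2$ map yields total degree at most $d_1 d_2$; hence $N$ layers realize polynomials of total degree at most $d^N$.

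Finally I would combine the two bounds. To represent $f$, whose degree is $\Theta(L)$ (at least $L+3$), with an $N$-layer attention stack one needs $d^N \geq L+3$, and therefore $N \geq \log_d(L+3) = \Omega(\log L)$. This yields the claimed logarithmic lower bound on the depth required of a single attention head to match one channel of S6, matching the phrasing of the lemma.

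I expect the main obstacle to be the softmax term: normalization turns the head's output into a rational rather than polynomial function, so a fully rigorous degree argument must either restrict to the exponentiation-as-polynomial regime above or separately argue that the normalizing denominator cannot inflate the achievable numerator degree beyond $O(1)$ in $L$. A secondary subtlety, which I would verify explicitly, is that the degree-$\Theta(L)$ monomial in $f$ does not cancel against lower-order terms and genuinely cannot be produced by a constant-degree polynomial; this reduces to checking that the chosen leading coefficient $c_1$ is nonzero, so that the maximal degree of $f$ is attained and the comparison with $d^N$ is valid.
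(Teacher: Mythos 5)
Your proposal follows essentially the same route as the paper's proof: unroll one S6 channel into a multivariate polynomial whose top monomial has total degree $\Theta(L)$, show that a single softmax-free attention head computes a degree-$3$ polynomial via the trilinear $QK^{T}V$ structure, and use multiplicativity of total degree under composition to conclude that $\Omega(\log L)$ stacked layers are required. Your two additions --- explicitly verifying that the leading coefficient $c_1 \neq 0$ so the top-degree monomial survives, and the degree-$(3P+1)$ handling of softmax via polynomial approximation of the exponential --- are sound and correspond, respectively, to a step the paper leaves implicit and to the informal argument the paper defers to its Discussion section (its formal appendix proof is restricted to the softmax-free case).
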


\begin{proof}[Proof of Lemma~\ref{lemma:dir2appendix}]\quad
The proof begins by characterizing the functions that an S6 layer can implement as multivariate polynomials with a maximal degree that scales linearly with the sequence length $L$. It then demonstrates that this property does not hold for self-attention layers.

\noindent\textbf{Single S6 Layer as Multivariate Polynomials\quad} 
Let $f$ be a function implemented by a S6 layer with the parameters $A, S_{\Delta}, S_{B}$ and $S_{C}$.

\smallskip
Recall that we deal with the following polynomial variant of S6, which is defined as follows:
\begin{equation}\label{eq:simplifiedModelAPPEND23}
    C_i = ( {S_C}X_i )^T, \quad \bar{B}_i = S_B X_i, \quad \bar{A}_i = P_A ( S_\Delta X_i )
\end{equation}
\begin{equation}\label{eq:simplifiedModelAPPEND234}
     H_{i} =  \bar{A}_t H_{i-1} + \bar{B}_i X_i, \quad Y_k = C_i X_i
\end{equation}

Since we are interested in identifying the minimal polynomial degree required to characterize S6 models, we can assume that $P_A$ is linear, namely $\bar{A}_i = S_\Delta X_i + A $. By plugging the zero matrix into $A$ we get:

Now we can write Eq.~\ref{eq:simplifiedModelAPPEND23} by:

\begin{equation}\label{eq:simplifiedModelAPPEND25}
     H_{i} =  S_\Delta X_i H_{i-1} + \bar{B}_i X_i, \quad Y_i = C_i X_i
\end{equation}

Now, assuming $S_{\Delta}, S_{B}$ and $S_{C}$ are sparse matrices such that they have zeros at all elements except a single column, namely, the time-variant matrices are controlled only by the first channel of the sequence $X$. Hence, this channel can be defined by:
\begin{equation}\label{eq:simplifiedModelAPPEND6}
     h_{i} =  S_\Delta x_i h_{i-1} + \bar{B}_i x_i, \quad y_i = C_i x_i
\end{equation}

This exact recurrent rule is discussed in Eq.~\ref{eq:simplifiedModel} (Single S6 layer as multivariate polynomials), and it can be represented as a multivariate polynomial with a maximum degree of $L+2$. Hence, we can deduce that one element in the output of an S6 layer has a minimal maximum degree of $L + 2$ when processing sequences of length $L$.

\smallskip
\noindent\textbf{Single Self-Attention Layer as Multivariate Polynomials\quad} 
Given an input matrix $X$, the self-attention mechanism without softmax operates as follows. The input is projected into query $Q$, key $K$, and value $V$ matrices using parameter matrices $W_Q$, $W_K$, and $W_V$, respectively:
\begin{align*}
Q = X W_Q, \quad K = X W_K, \quad V = X W_V.
\end{align*}
The attention scores $A$ are then computed as:
\begin{align*}
A = Q K^T = (X W_Q)(X W_K)^T.
\end{align*}
The output matrix $Y$ is calculated by:
\begin{align*}
Y = A V = (X W_Q W_K^T X^T)(X W_V).
\end{align*}
This formulation leads to the conclusion that each element in $Y$ can be expressed as a multivariate polynomial with a maximum degree of 3 in the elements of $X$, where the polynomial arises from trilinear interactions. Additionally, it is important to note that in the case of causal attention, the mechanism is more constrained, and the maximum degree does not exceed 3.

Now, as it is clear that each single attention layer can be represented by a multivariate polynomial with a maximum degree of 3, we can generalize our characterization for $N$-stacked self-attention layers. Recall that the composition of multivariate polynomials is also a multivariate polynomial. Moreover, the maximum degree of the resulting polynomial is the product of the maximum degrees of the composed polynomials. Hence, we can argue that each element in the output of $N$-stacked self-attention layers can be represented by multivariate polynomials with a maximum degree $p \leq 3^N$. Therefore, it is clear that to represent a polynomial with a maximum degree of $L+2$ by $N$-stacked self-attention layers, at least $N \in O(\log L)$ layers are required.
\end{proof}

\begin{lemma}\label{lemma:dir1Appendix}
Any function that can be expressed by a single attention head can also be expressed by a single channel of S6.
\end{lemma}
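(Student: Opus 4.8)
The plan is to prove this containment by showing that the multivariate-polynomial representation of a single self-attention head, already derived in Lemma~\ref{lemma:dir2appendix}, can be matched exactly by a suitable specialization of the S6 recurrence in Eq.~\ref{eq:simplifiedModelAPPEND6}. Recall that a single (causal, linear) attention head produces an output whose last coordinate is the degree-3 polynomial $y_L = \sum_{j=1}^L c_j\, x_j^2\, x_L$ with $c_j = W_Q W_K W_V$ constant across $j$. Since S6 has strictly greater maximal degree ($L+2$ versus $3$), the natural approach is a \emph{constructive realization}: choose the S6 parameters so that the higher-degree monomials vanish and the surviving terms reproduce the attention polynomial exactly.

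First I would write out the unrolled S6 output from Eq.~\ref{eq:simplifiedModelAPPEND6}, giving $y_L = S_C S_B \sum_{j=1}^L \big(\prod_{k=j+1}^L \bar{A}_k\big)\, x_j^2\, x_L$ as in Eq.~\ref{eq:unrolledChanSl}, where each $\bar A_k = S_\Delta x_k$ depends on the input. The crucial observation is that the product $\prod_{k=j+1}^L \bar A_k$ is exactly what creates the extra degree beyond $3$; so to collapse S6 down to attention I would set $S_\Delta$ so that $\bar A_k$ becomes the constant $1$ for every $k$ (for instance by using the affine/constant freedom in $P_A$, i.e.\ taking $P_A$ to output a constant, so $\bar A_k \equiv 1$ independent of the input). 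With all transition factors equal to $1$, the telescoping product becomes $1$ for every $j$, and the S6 output reduces to $y_L = S_C S_B \sum_{j=1}^L x_j^2\, x_L$, which is precisely the degree-3 attention polynomial with the single scalar coefficient $c = S_C S_B$.

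The remaining step is a matching-of-constants argument: given an arbitrary attention head with product coefficient $c = W_Q W_K W_V$, I would choose the scalar parameters $S_C, S_B$ of the S6 channel so that $S_C S_B = c$ (always solvable over $\mathbb{R}$, e.g.\ $S_C = c$, $S_B = 1$), thereby realizing the identical function. I would then note that this argument is per-output-coordinate and extends to every position $t \le L$ by the same telescoping-to-$1$ reduction, and that the causal masking in attention is automatically respected because the S6 recurrence is itself causal (the state $h_t$ depends only on inputs up to time $t$). This establishes that every function in the attention hypothesis class lies in the S6 hypothesis class.

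The main obstacle I anticipate is \textbf{not} the algebra but making precise the parameter-tying between the simplified polynomial S6 of Eq.~\ref{eq:simplifiedModel}/Eq.~\ref{eq:simplifiedModelAPPEND6} and the claim for a \emph{general} attention head: one must verify that forcing $\bar A_k \equiv 1$ is genuinely achievable within the admissible parameterization of $P_A$ (the stabilizers $p_1,p_2$ and the matrix $A$) rather than only in the linearized surrogate used for the degree count, and that the input-dependence of $B_i = S_B x_i$ and $C_i = S_C x_i$ reproduces the $x_j^2$ and $x_L$ factors with the correct multiplicities. I would handle this by checking that the constant-transition regime is a limiting/special case of the exponential discretization $\bar A_t = \exp(\Delta_t A)$ (obtained as $A \to 0$, giving $\bar A_t \to 1$), so the realization holds for the genuine S6 layer of Eq.~\ref{eq:model} and not merely its polynomial approximation. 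Once that admissibility is confirmed, the containment follows immediately.
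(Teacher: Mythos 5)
Your proposal is correct and follows essentially the same route as the paper's proof: both arguments kill the input-dependence of the transition (the paper sets $S_\Delta=0$, $A=1$ so $\bar{A}_i\equiv 1$, exactly as you do by collapsing the telescoping product to $1$) and then identify the remaining degree-3 polynomial with the causal linear-attention form by matching $S_B,S_C$ to $W_K,W_Q$. The only cosmetic difference is that you absorb $W_V$ into the scalar product $S_C S_B$, whereas the paper keeps $S_C=W_Q$, $S_B=W_K$ and delegates $W_V$ to the linear layer that follows S6 inside the Mamba block; and note that $\bar{A}_t=\exp(\Delta_t A)=1$ holds exactly at $A=0$, so no limiting argument is needed.
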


\begin{proof}[Proof of Lemma~\ref{lemma:dir1Appendix}]\quad 
Let $f$ be a function implemented via a single attention head, which has the parameters $W_K$, $W_V$, $W_Q$.

Recall that we deal with the following polynomial variant of S6, which is defined as follows:
\begin{equation}\label{eq:simplifiedModelAPPEND}
\small
    C_i = ( {S_C}X_i )^T, \quad \bar{B}_i = S_B X_i, \quad \bar{A}_i = P_A ( S_\Delta X_i )
\end{equation}
\begin{equation}
\small
     H_{i} =  \bar{A}_t H_{i-1} + \bar{B}_i X_i, \quad Y_k = C_i X_i
\end{equation}

For simplicity, assume that we are dealing with a 1-degree polynomial $P_A$ such that $  \bar{A}_i = S_\Delta X_i + A $. 

By substituting the zero matrix for $S_\Delta$ and $A=1$, and plugging $S_C = W_Q$ and $S_B = W_K$, we get:
\begin{equation}
\small
    C_i = (W_Q X_i)^T, \quad \bar{B}_i = W_K X_i, \quad \bar{A}_i = \bar{A}_{i-1}
\end{equation}

By simply unrolling this equation:

\begin{equation} \label{eq:unrolling3a}
\small
 H_{i} = \sum_{j=1}^i \big{(} \Pi_{k=j+1}^i \bar{A}_k \big{)} W_K X_{j} X_j = \sum_{j=1}^i W_K X_{j} X_j, \quad %
 \end{equation}
\begin{equation} \label{eq:unrolling4a}
\small
 Y_{i} =  {X_i}^T {W_Q}^T \sum_{j=1}^i W_K X_{i} X_j =  \sum_{j=1}^i {X_i}^T {W_Q}^T W_K X_{j} X_j
\end{equation}

By converting Eq.~\ref{eq:unrolling4a} into a matrix form:

\begin{equation}\label{eq:MAMbaASmatmul1}
\small
Y = \tilde{\alpha} X
\end{equation} 
where $\tilde{\alpha}$ is defined by:
\begin{equation}\label{eq:MAMbaASmatmul}\nonumber
\tiny
\hspace{-11pt}
\begin{bmatrix}
    {X_1}^T {W_Q}^T  W_K X_1 & 0 & \hspace{-2pt}\cdots & 0 \\
    {X_2}^T {W_Q}^T  W_K X_1 & {X_2}^T {W_Q}^T  W_K X_2 & \hspace{-2pt}\cdots & 0 \\
    \vdots & \vdots &\hspace{-2pt} \ddots & 0 \\
    {X_L}^T {W_Q}^T   W_K X_{1} \quad & {X_L}^T {W_Q}^T  W_K X_{2} \quad & \hspace{-2pt}\cdots \quad & {X_L}^T {W_Q}^T  W_K X_L
\end{bmatrix}
\end{equation}

Which is the exact formulation of causal self-attention (without softmax) using attention matrices denoted by $\tilde{\alpha}$. Furthermore, to incorporate the value matrix $W_V$ an additional linear layer should be applied after step S6. These layers are indeed present in the Mamba block.
\end{proof}
\setcounter{theorem}{1}
\begin{theorem}\label{theorem:AnyPolywithMambasAppendix}
Given an input scalar sequence $x \in \mathbb{R}^L $, a model with four stacked Mamba layers, a sufficiently large number of channels, learnable PE, and a linear encoder at the first layer can express any multivariate polynomial of $x$.
\end{theorem}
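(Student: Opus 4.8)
The plan is to reduce Theorem~\ref{theorem:AnyPolywithMambasAppendix} to Lemma~\ref{lemma:3layerMambaExpresivity}, which already provides the essential building block: three stacked Mamba layers that output a single monomial $c \cdot x_j^P$ in a designated channel. Since an arbitrary multivariate polynomial of bounded degree in $x = (x_1,\dots,x_L)$ is a finite linear combination of monomials of the form $\prod_{\ell} x_{\ell}^{p_\ell}$, the core task is to show how to (i) produce each required monomial in its own channel and (ii) sum them with the prescribed coefficients. First I would address the single-variable powers: by Lemma~\ref{lemma:3layerMambaExpresivity}, applied channel-by-channel with a sufficiently wide model, the first three layers can emit, in parallel across distinct channels, every power $x_j^{p}$ that appears in any factor of any monomial of the target polynomial. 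The width requirement is met by the ``sufficiently large number of channels'' hypothesis, and the learnable positional encoding together with the linear encoder in the first layer supplies the per-position selection described in part (i) of the lemma's proof sketch.

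The second step is to build genuine \emph{multivariate} monomials, i.e. products across different indices $j$, out of the single-variable powers from the first three layers. Here I would use the multiplicative capability of the S6 layer established in Eq.~\ref{eq:SimpleMamba1Chan}: the unrolled recurrence contains the aggregating product term $\Pi_{k=j+1}^{t} x_k$, so a single S6 channel, fed a sequence whose active entries are the precomputed powers $x_{j_1}^{p_1}, x_{j_2}^{p_2}, \dots$ laid out at successive positions, multiplies them together to yield $\prod_s x_{j_s}^{p_s}$. This is precisely why a \emph{fourth} Mamba layer is needed beyond the three of Lemma~\ref{lemma:3layerMambaExpresivity}: layers one through three manufacture the individual powers in parallel channels and route them (via the gating branch, skip connections, and learnable PE) into a common sequence, and the fourth layer's S6 block collapses that sequence into the desired product via the $\Pi$ term. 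The telescoping subtraction trick from the lemma's proof (forming $SSM(z) - SSM(z')$ at the block's final linear layer) is reused to isolate exactly the intended product and cancel the unwanted partial products generated along the recurrence.

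The final step is aggregation: the linear projection head at the output of the fourth layer takes a weighted sum of the channels, each holding one monomial, with weights equal to the target coefficients $c_i$. Because linear combination is exactly what a final \texttt{Linear} layer computes, and because the number of monomials of bounded degree is finite, a sufficiently wide model accommodates one channel per monomial and the head realizes $\sum_i c_i \prod_s x_{j_s}^{p_{i,s}}$. The ``sufficient padding'' hypothesis guarantees there is room to lay out all the powers positionally even when a monomial's total degree exceeds $L$, mirroring the ZeroPad argument in the lemma.

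The main obstacle I expect is the bookkeeping in the second step: coordinating the per-monomial channels so that the single-variable powers destined for one product are routed to consecutive positions in one channel's input sequence \emph{without} contaminating the computation of other monomials sharing the same physical layer. This requires carefully engineering the positional masks and the gating so that distinct monomials occupy disjoint channel-position blocks, and verifying that the S6 multiplicative recurrence, together with the telescoping cancellation, cleanly extracts one product per channel. Establishing this non-interference rigorously — essentially a careful channel-and-position scheduling argument — is the technically delicate part, whereas the reduction to monomials and the final linear aggregation are routine once it is in place.
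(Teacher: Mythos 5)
Your proposal follows essentially the same route as the paper's proof: Lemma~\ref{lemma:3layerMambaExpresivity} supplies the univariate powers $s_j=\alpha_{i,j}x_j^{p_{i,j}}$ in the first three blocks, the output linear layer of the third block lays them out as a sequence (together with a shifted copy), the fourth block's S6 with unit system matrices multiplies them via the $\Pi$ term and the telescoping subtraction $\mathrm{SSM}(S')-\mathrm{SSM}(S'')$ isolates the full product $P_i(x)$ per channel, and $W_{\text{out}}$ carries the coefficients $c_i$. The channel/position non-interference you flag as the delicate part is indeed glossed over in the paper as well (it simply asserts the result can be "placed in the $i$-th channel" by the final linear layer), so your account is faithful to the paper's argument.
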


\begin{proof}[Proof of Theorem~\ref{theorem:AnyPolywithMambasAppendix}]\quad
We start with the following definition of our model. We begin by describing the hidden Mamba layers, followed by the input and output layers:

\smallskip
\noindent\textbf{Model Definition\quad} We define the model with d channels, ignoring its activations. We denoted the \( i \)-th Mamba block by (see Eq.~\ref{eq:mamba1}):

\begin{equation}
    \small
    U^{i+1} = {\text{Linear}_1}^i {(\text{S6}}^i({\text{Conv1D}}^i({\text{Linear}_2}^i(U^i)) \otimes {\text{Linear}_3}^i(U^i) )
\end{equation}

where sequences and sub-layers associated with the \( i \)-th layer are denoted by the super-index \( i \). Thus, the entire computation of single layer can be described by

\begin{equation}
    \small
    U^{i+1} = \text{Mamba}^i (U_i)
\end{equation}

The output linear layer projects the last token (which include d channels) into a single output, parameterized by a matrix \( W_{\text{out}} \in \mathbb{R}^{d \times 1} \). The input layer includes the learnable positional encoding, represented by a matrix \( PE \in \mathbb{R}^{L \times d} \), and an encoding layer parameterized by \( \text{Encoding}(x) = W_{\text{in}}x + b + PE \), where \( W_{\text{in}}, b \in \mathbb{R}^{1 \times d} \).

\smallskip
\noindent\textbf{Proof by Construction\quad}
Let \( P(x) \) be a multivariate polynomial with coefficients \( c_1, \cdots, c_T \) and variables \( x = (x_1, x_2, \ldots, x_n) \). Specifically, \( P(x) \) can be expressed as:

\begin{equation}
P(x) = \sum_{i=1}^{T} c_i \cdot P_i(x), \quad \forall i: P_i (x) = \Pi_{j=1}^L \alpha_{i,j} {x_j}^{p_{i,j}}
\end{equation}

We assign the values of \( c_1, \cdots, c_T \) to \( W_{\text{out}} \) such that:
\begin{equation}
W_{\text{out}}[i,1] = 
\begin{cases} 
       c_i & \text{if } i \leq T, \\
       0 & \text{otherwise}.
\end{cases}
\end{equation}

It remains to show that for any \( i \), \( P_i(x) \) can be expressed by the last (4th) Mamba block. From Lemma~\ref{lemma:3layerMambaExpresivityAppendix}, it is evident that for any \( j \), the univariate polynomial \( s_j = \alpha_{i,j} x_j^{p_{i,j}} \) can be represented by the 3rd Mamba block. Thus, in the first linear layer of the 3rd block (\( \text{Linear}_1^3 \)), these univariate polynomials can be merged, as follows:

Define the following two sequences with Lemma 3 and $ \text{Linear}_1^3$:
\[
S' = (0, \cdots, 0, 1, s_1, s_2, \cdots, s_L, 1),
\]
\[
S'' = (0, \cdots, 0, 0, s_1, s_2, \cdots, s_L, 1),
\]
which are injected into the last SSM (\( \text{S6}^4 \)).

When applying the same SSM with system matrices equal to 1 (\( A_i = B_i = C_i = 1 \)), and subtracting them, placing the result in the i-th channel (which can be easily implemented by the last linear layer in the final Mamba block), yield:

\begin{align}
\forall i: U^{4}_i = \prod_{j=1}^L s_j = P_i(x), \quad \rightarrow W_{out}U^4 = P(x)
\end{align} 
as required.
\end{proof}

\begin{lemma}\label{lemma:3layerMambaExpresivityAppendix}
Given an input scalar sequence $x \in \mathbb{R}^L$, for any $j$, a model $M$ with 3-stacked Mamba layers, a sufficiently large number of channels, learnable PE, and a linear encoder in the first layer can express any monomial of a univariate polynomial in $x_j$. Specifically, for any constants $c \in \mathbb{R}$ and $P \in \mathbb{R}$, there exists a configuration of $M$ such that the output of the $k$-th channel $M(x)_k = c \cdot x_j^P$ for $k > P + j$.
\end{lemma}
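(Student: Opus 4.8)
The plan is to prove this by explicit construction, assigning to each of the three Mamba blocks exactly one elementary operation — position selection, value duplication, and multiplicative aggregation — and then composing them. Throughout I work under the simplifying assumptions stated for Theorem~\ref{theorem:AnyPolywithMambasAppendix} ($N=1$, SiLU activations removed), which only restrict the model. The pipeline is: the first block extracts $x_j$ and zeros every other position; the second block duplicates $x_j$ into a contiguous block of copies; the third block multiplies those copies together via the S6 recurrence and isolates the single target power $x_j^P$.

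For the \textbf{first block (selection)} I would set the S6 core to the identity map by taking $\bar{A}=0$ and $\bar{B}=\bar{C}=1$ (cf.\ Eq.~\ref{eq:recRule}), so the S6 path passes the input through unchanged. Simultaneously I use one coordinate of the learnable positional encoding $PE$ to realize the indicator $\mathbb{I}_{\{i=j\}}$ of position $j$, and route this coordinate through the gating branch (the $\otimes Z$ term of Eq.~\ref{eq:mamba1}) together with the per-position linear layers. Multiplying pointwise by this mask yields the sequence equal to $x_j$ at position $j$ and $0$ elsewhere. For the \textbf{second block (duplication)}, starting from $(0,\dots,0,x_j,0,\dots)$ I use a degenerate single-state SSM with $\bar{A}=\bar{B}=\bar{C}=1$; its unrolling computes a running sum, and since the only nonzero input enters at position $j$, the running sum equals $x_j$ at every position $\ge j$, giving $(0,\dots,0,x_j,x_j,\dots,x_j)$. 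The ZeroPad component lets me extend the sequence past position $P+j$, so the duplication is valid even when $P>L$. Using the gate, $PE$, and linear biases I reshape this into $z=(1,\dots,1,x_j,\dots,x_j,1,\dots,1)$ carrying exactly $P-2$ copies of $x_j$ flanked by ones.

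For the \textbf{third block (aggregation and isolation)}, I exploit the multiplicative term $\prod_{k=j+1}^{t}\bar{A}_k$ appearing in the S6 unrolling of Eq.~\ref{eq:SimpleMamba1Chan}: feeding $z$ into the $\bar{A}$ path converts this product into a cumulative product of consecutive entries, so the output sweeps through $(\dots,x_j^3,\dots,x_j^P,\dots)$. To extract precisely $x_j^P$ I apply the \emph{same} SSM to a second copy $z'$ obtained from $z$ by inserting one additional leading $1$ (equivalently, shifting the block of copies by one position), and subtract the two S6 outputs in the final linear layer of the block. The difference is telescoping: every power except $x_j^P$ cancels, leaving $x_j^P$, and the scalar $c$ is absorbed into that same linear layer to output $c\cdot x_j^P$. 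The index bookkeeping places the result in channels $k>P+j$, matching the statement.

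The main obstacle is the index bookkeeping behind the telescoping subtraction: one must arrange $z$ and $z'$ so that their cumulative products coincide at every position except the single target, guaranteeing that all unwanted monomials cancel while exactly one copy of $x_j^P$ survives, and one must check that this construction is not corrupted by the gating or by the $N=1$, SiLU-free restrictions (no spurious cross-terms are introduced). The two auxiliary points requiring verification are that the learnable $PE$ can indeed realize the indicator mask of position $j$, and that ZeroPad legitimately supplies the extra positions needed when $P>L$.
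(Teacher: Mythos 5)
Your construction is essentially identical to the paper's own proof: the first block isolates $x_j$ via a positional-encoding indicator mask routed through the gate while the S6 core acts as the identity ($\bar{A}=0$, $\bar{B}=\bar{C}=1$), the second block duplicates $x_j$ exactly $P-2$ times with a degenerate SSM ($\bar{A}=\bar{B}=\bar{C}=1$) and ZeroPad, and the third block aggregates the copies multiplicatively and isolates $x_j^P$ by subtracting the outputs of the same SSM applied to $z$ and to a one-position-shifted $z'$, yielding the same telescoping cancellation. The index-bookkeeping caveats you flag are present (and left at the same level of detail) in the paper's argument as well, so no further comparison is needed.
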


\begin{proof}[Proof of Lemma~\ref{lemma:3layerMambaExpresivityAppendix}]\quad
\smallskip
Given an input sequence \( x = (x_1, x_2, \dots, x_L) \) with \( d \) channels, we need to show that a model \( M \) composed of three stacked Mamba layers can express any univariate polynomial \( x_j^P \) for a particular \( j \), where \( P \) are constant.

For simplicity, we assume that $P <<L$. However, this assumption can be addressed by extending the sequence length using the ZeroPad component.

\noindent \textbf{Step 1: Position Selection (First Mamba Block)} 

The first block's role is to isolate the desired position \( j \) in the input sequence $(x_j)$ at channel $s$. We achieve this as follows:

\begin{itemize}
    \item \textbf{Positional Encoding and Gating:} We configure the learnable positional encoding at channel $s$ (\( PE_{*,s} \)) such that the \( j \)-th position is highlighted, and the others positions are masked. Specifically, set the PE vector for the \( j \)-th position to act as an indicator function:
    \[
    PE[i,s] = \begin{cases} 
       1 & \text{if } i = j, \\
       0 & \text{otherwise}.
    \end{cases}
    \]\

    Please note that $s> \frac{d}{2}$, and the first half of the channels in the PE are set to zeros to ensure a clear separation between the input values $X$ and the positional encoding. Additionally, we set $b=0$, and configure the first half of the entries in $W_{in}$ to 1, while setting the remaining entries to zero. This configuration effectively duplicates the input sequence, allowing for several manipulations without affecting the original signal.
    
    \item \textbf{Linear Layers Configuration:} The linear layers \( \text{Linear}_2^1 \) and \( \text{Linear}_3^1 \) are configured to be identity mappings, i.e., they do not alter the input sequence.
    
    \item \textbf{S6 Module Configuration:} Set the S6 parameters \( \bar{A} = 0 \) and \( \bar{B} = \bar{C} = 1 \), effectively making it an identity operation. This setup ensures that the output of this block isolates \( x_j \) while setting all other positions to zero:
    \[
    U^2 = (0, \dots, 0, x_j, 0, \dots, 0)
    \]
\end{itemize}

\noindent \textbf{Step 2: Duplication of \( x_j \) (Second Mamba Block)}

The second block is responsible for duplicating the selected element \( x_j \) to match the desired power \( P \). Here's how:

\begin{itemize}
    \item \textbf{Identity Mapping:} We set \( \text{Linear}_2^2 \) and \( \text{Linear}_3^2 \) to identity mappings. Additionally, the gate branch on the channels operating on the isolated \( x_j \) is set to 1 at positions smaller than $h$ and 0 for the rest, also functioning as an identity mapping at positions $i \leq h$, and masking positions for indexes $i>j$.

    \item \textbf{S6 :} The S6 module is configured to allow duplication. Specifically, by setting \( \bar{A} = 1 \) and \( \bar{B}, \bar{C} = 1 \), the S6 can output a sequence with multiple copies of \( x_j \):
    \[
    U^3 = (0, \dots, 0, x_j, x_j, \dots, x_j, 0, \dots, 0)
    \]
    Here, \( x_j \) appears \( P-2 \) times, and the last $x_j$ positioned at index $i=h$.
\end{itemize}

\noindent \textbf{Step 3: Aggregate Multiplications to Powers (Third Mamba Block)}

The third Mamba block is designed to aggregate the duplicated elements \( x_j \) into the form \( x_j^P \), utilizing the multiplicative capabilities of the S6 module and the subtraction mechanism in the final linear layer.

\begin{itemize}
    \item \textbf{S6 Module Configuration:} In this block, the S6 module is configured to perform the necessary multiplications that aggregate the duplicated values of \( x_j \). This is achieved by setting the system input and output matrices to 1. Hence, the output of the SSM when applied on $U_2$ at position $h$ will result at:
    
    \begin{equation}
    \small
    \sum_{j=1}^{P-2} c_j {x_j}^{j+2}
    \nonumber
    \end{equation}

Thus, we construct an additional sequence, similar to $U_2$, denoted by $U_2'$, by introducing an additional zero at the initial occurrence of $x_j$. We then subtract the outputs from these two identical SSM channels at the final linear layer of the block. This subtraction yields a telescoping series:
    \begin{equation}
    \text{SSM}(U_2) - \text{SSM}(U_2') = 
    \end{equation}
     
    \begin{equation}
    \small
    \sum_{j=1}^L c_j \Big{(} \Pi_{k=j+1}^{t-1} x_k \Big{)} {x_t}^2 {x_j}^2 - \sum_{j=2}^L c_j \Big{(} \Pi_{k=j+1}^{t-1} x_k \Big{)} {x_t}^2 {x_j}^2 
%
    =\sum_{j=1}^{P-2} c_j {x_j}^{j+2} - \sum_{j=2}^{P-2} c_j {x_j}^{j+2} = {x_j}^{P}
\end{equation}
yields
   \[
    U^4 = \left(0, \dots, 0, x_j^P, 0, \dots, 0\right)
    \]
\end{itemize}

This construction shows that a model with three stacked Mamba layers and sufficient channels can indeed express any univariate polynomial \( x_j^P \), thereby proving Lemma~\ref{lemma:3layerMambaExpresivity}.

\end{proof}

\setcounter{section}{2}
\subsection{Generalization}

Let $P$ be a distribution over $\mathbb{R}^{D \times L} \times [C]$. Let $S = \{( X^{}_{(j)},y_{(j)})\}^{m}_{j=1}$ be a dataset of i.i.d. samples selected from $P$. Our generalization bound is based on a uniform-convergence generalization bound provided in ~\citep{galanti2024norm}. The following lemma bounds the gap between the test error and the empirical margin error, represented as $\err^{\gamma}_S(f_w)= 
\fr{1}{m}\sum^{m}_{j=1}\bI[\max_{c \neq c'}(f^c_w( X^{}_{(j)})) + \gamma \geq f^{c'}_w( X^{}_{(j)})]$. 

\begin{lemma}\label{lem:loss_ramp}
Let $P$ be a distribution over $\mathbb{R}^{\mathcal{D}} \times [C]$ and $\mathcal{F} \subset \{f':\mathcal{X} \to \R^C \}$. Let $S = \{(X_j,y_j)\}^{m}_{j=1}$ be a dataset of i.i.d. samples selected from $P$ and $X=\{X_j\}^{m}_{j=1}$. Then, with probability at least $1-\delta$ over the selection of $S$, for any $f_w \in \mathcal{F}$, we have
\begin{equation}\label{eq:Radbound}
\err_P(f_w) - \err^{\gamma}_S(f_w) \leq \frac{2\sqrt{2}}{\gamma} \cdot \mathcal{R}_{X}(\mathcal{F}) + 3\sqrt{\frac{\log(2/\delta)}{2m}}.
\end{equation} 
\end{lemma}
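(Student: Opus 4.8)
The plan is to control the $0$–$1$ test error by the expectation of a Lipschitz margin surrogate, push that expectation through a standard data-dependent Rademacher bound, and then peel the surrogate off by contraction. First I would rewrite both errors in terms of the multiclass margin $\mathcal{M}(f_w, X, y) = f^{y}_w(X) - \max_{c \neq y} f^c_w(X)$, so that $\err_P(f_w) = \EE{\bI[\mathcal{M}(f_w, X, y) \le 0]}$ and $\err^\gamma_S(f_w) = \frac{1}{m}\sum_{j=1}^m \bI[\mathcal{M}(f_w, X_{(j)}, y_{(j)}) \le \gamma]$. I would then fix the $1/\gamma$-Lipschitz ramp $\phi_\gamma : \R \to [0,1]$ that equals $1$ on $(-\infty, 0]$, decreases linearly to $0$ on $[0,\gamma]$, and is $0$ on $[\gamma, \infty)$, which sandwiches the two indicators pointwise: $\bI[t \le 0] \le \phi_\gamma(t) \le \bI[t \le \gamma]$.

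Second, I would apply the textbook empirical-Rademacher bound for classes of $[0,1]$-valued functions to the surrogate class $\mathcal{G} = \{(X,y) \mapsto \phi_\gamma(\mathcal{M}(f_w, X, y)) : f_w \in \mathcal{F}\}$. Symmetrization together with McDiarmid's bounded-difference inequality, plus a second application of McDiarmid to swap the expected Rademacher complexity for its empirical version (splitting the failure budget as $\delta/2 + \delta/2$), gives with probability at least $1-\delta$, simultaneously for all $g \in \mathcal{G}$, the inequality $\E[g] \le \frac{1}{m}\sum_{j} g(X_{(j)}, y_{(j)}) + 2\,\mathcal{R}_X(\mathcal{G}) + 3\sqrt{\log(2/\delta)/(2m)}$; the constant $3$ and the $\log(2/\delta)$ are precisely the artifacts of this two-step, union-bounded argument. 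Feeding in the sandwich from the first step — $\err_P(f_w) \le \E[g]$ on the left and $\frac{1}{m}\sum_j g(X_{(j)},y_{(j)}) \le \err^\gamma_S(f_w)$ on the right — yields $\err_P(f_w) - \err^\gamma_S(f_w) \le 2\,\mathcal{R}_X(\mathcal{G}) + 3\sqrt{\log(2/\delta)/(2m)}$.

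Third, I would reduce $\mathcal{R}_X(\mathcal{G})$ to $\mathcal{R}_X(\mathcal{F})$ by two successive contractions. Since $\phi_\gamma$ is scalar and $1/\gamma$-Lipschitz, Talagrand's contraction lemma strips it off at cost $1/\gamma$, leaving the Rademacher complexity of the scalar margin class $\{(X,y) \mapsto \mathcal{M}(f_w, X, y) : f_w \in \mathcal{F}\}$. Viewed as a function of the score vector $v = (f^1_w(X), \ldots, f^C_w(X))$, the margin operator $v \mapsto v_y - \max_{c \neq y} v_c$ has a subgradient $e_y - e_{c^\ast}$ of Euclidean norm $\sqrt 2$, hence is $\sqrt 2$-Lipschitz; a vector-valued contraction inequality then bounds the margin-class complexity by $\sqrt 2 \cdot \mathcal{R}_X(\mathcal{F})$. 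Composing the two factors turns $2\,\mathcal{R}_X(\mathcal{G})$ into $\frac{2\sqrt 2}{\gamma}\,\mathcal{R}_X(\mathcal{F})$, which is exactly the stated leading term.

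I expect the vector-valued contraction to be the main obstacle: the ordinary scalar Talagrand lemma does not apply directly to the $C$-dimensional class $\mathcal{F}$, so I would invoke a vector-contraction inequality (of Maurer type) or the specific multiclass lemma of \citet{galanti2024norm} on which this bound is based, and verify that its hypotheses — the $\sqrt 2$-Lipschitz margin operator and the particular form of $\mathcal{R}_X(\mathcal{F})$ for vector-valued classes — are met so that no extra dependence on the number of classes $C$ leaks into the final constant.
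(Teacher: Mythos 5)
The paper never proves this lemma: it is imported directly from \citet{galanti2024norm} (``Our generalization bound is based on a uniform-convergence generalization bound provided in \citet{galanti2024norm}''), so there is no in-paper proof to compare against. Your reconstruction --- ramp-loss sandwich, symmetrization with McDiarmid's inequality applied twice, then scalar contraction for the ramp and a vector-valued contraction for the margin operator --- is exactly the standard argument underlying the cited result, and it is correct in outline. The one point to watch is the constant bookkeeping in the final step: composing the $1/\gamma$-Lipschitz ramp with the $\sqrt{2}$-Lipschitz margin operator and then paying the additional $\sqrt{2}$ from a generic Maurer-type vector contraction would naively yield $4/\gamma$ rather than $2\sqrt{2}/\gamma$, so you genuinely need the precise multiclass contraction statement (and the matching double-indexed definition of $\mathcal{R}_X(\mathcal{F})$ used in Theorem~\ref{theorem:rad}) from the cited source rather than assembling the factors independently --- a caveat you correctly anticipate in your last paragraph.
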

\begin{lemma}\label{lem:peeling}
Let $\sigma_j$ be a $l_j$-Lipschitz, positive-homogeneous function and $l=\max_j l_j$. Let $\xi_i \sim U[\{\pm 1\}]$. Then for any class of vector-valued functions $\mathcal{F} \subset \{f \mid ~f:\mathbb{R}^d \to \mathbb{R}\}$ and any convex and monotonically
increasing function $g : \mathbb{R} \to [0,\infty)$, 
\begin{small}
\begin{equation*}
\begin{aligned}
\E_{\xi} \sup_{\substack{f\in \mathcal{F}}} g\left( \left| \sum^{m}_{j=1} \xi_j \cdot \sigma_j(f(x_j))\right| \right)
~\leq~ 2\E_{\xi} \sup_{f\in \mathcal{F}} g\left(l\left|\sum^{m}_{i=j} \xi_j  \cdot f(x_j)\right| \right).
\end{aligned}
\end{equation*}
\end{small}
\end{lemma}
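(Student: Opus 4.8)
The plan is to treat Lemma~\ref{lem:peeling} as a Rademacher \emph{contraction} (``peeling'') inequality in the style of Ledoux--Talagrand, adapted to a convex, monotonically increasing outer function $g$ as in \citep{golowich2018size}. I would organize the argument around three reductions: stripping the absolute value at the cost of a factor $2$, normalizing the Lipschitz constant to $1$ by absorbing $l$ into $g$, and finally contracting the nonlinearities one summand at a time.

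\emph{Removing the absolute value.} Since $g$ is nonnegative and increasing, $g(|z|)=\max\{g(z),g(-z)\}\le g(z)+g(-z)$ for every $z\in\R$. Taking $z=\sum_{j}\xi_j\sigma_j(f(x_j))$, applying $\sup_f$ and $\E_\xi$, and using subadditivity of the supremum, I get
\begin{equation*}
\begin{aligned}
\E_\xi \sup_{f} g\!\left(\Big|\sum_{j}\xi_j\sigma_j(f(x_j))\Big|\right)
&\le \E_\xi \sup_{f} g\!\left(\sum_{j}\xi_j\sigma_j(f(x_j))\right) \\
&\quad + \E_\xi \sup_{f} g\!\left(-\sum_{j}\xi_j\sigma_j(f(x_j))\right).
\end{aligned}
\end{equation*}
The map $\xi\mapsto-\xi$ preserves the Rademacher distribution and sends the second term to the first, producing the factor $2$. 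It then suffices to bound the \emph{signed} quantity $\E_\xi\sup_f g(\sum_j\xi_j\sigma_j(f(x_j)))$ by $\E_\xi\sup_f g(l\sum_j\xi_j f(x_j))$, because monotonicity of $g$ restores the absolute value on the right for free.

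\emph{Normalizing and peeling.} Writing $\tilde\sigma_j=\sigma_j/l$ (which is $1$-Lipschitz and positive-homogeneous) and $\hat g(z)=g(lz)$ (still convex and increasing), the signed target becomes $\E_\xi\sup_f\hat g(\sum_j\xi_j\tilde\sigma_j(f(x_j)))\le \E_\xi\sup_f\hat g(\sum_j\xi_j f(x_j))$, i.e.\ a contraction to the identity. I would prove this by induction, replacing one $\tilde\sigma_j$ by the identity per step. Each step reduces to a single-coordinate claim: for an arbitrary remainder functional $\psi:\mathcal{F}\to\R$ (collecting all other terms) and a $1$-Lipschitz positive-homogeneous $\sigma$,
\begin{equation*}
\E_{\xi_m}\sup_{f}\hat g\big(\psi(f)+\xi_m\,\sigma(f(x_m))\big)\;\le\;\E_{\xi_m}\sup_{f}\hat g\big(\psi(f)+\xi_m\, f(x_m)\big).
\end{equation*}
Expanding $\E_{\xi_m}$ as the average over $\pm1$ and passing to near-maximizers $f_1,f_2$ of the two supremum terms, I would show the pair $\big(\psi(f_1)+\sigma(f_1(x_m)),\,\psi(f_2)-\sigma(f_2(x_m))\big)$ is dominated, after a suitable sign choice, by one of the form $\big(\psi(f)+f(x_m),\,\psi(f')-f'(x_m)\big)$; this is a short case analysis on the sign of $f_1(x_m)-f_2(x_m)$ using convexity and monotonicity of $\hat g$, where positive homogeneity is what lets me rescale and align signs so that the bound $|\sigma(p)-\sigma(q)|\le|p-q|$ transfers into the comparison of convex combinations.

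The step I expect to be the main obstacle is precisely this single-coordinate comparison. For the identity outer function it is immediate from Lipschitzness, but since $\hat g$ is a general convex increasing function applied to the full remainder-plus-term sum, the contraction cannot be performed termwise inside $\hat g$; one must compare convex combinations and exploit positive homogeneity to handle the signs---this is exactly where the hypothesis that each $\sigma_j$ is positive-homogeneous (rather than merely Lipschitz with $\sigma_j(0)=0$) is essential. Chaining these single-coordinate estimates through the induction, together with letting the slack from the near-maximizers vanish, then yields the contraction, and combining it with the factor $2$ from the first reduction gives the stated inequality.
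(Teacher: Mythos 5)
Your proof is correct and follows essentially the same route as the paper's: remove the absolute value at the cost of a factor $2$ using $g(|z|)\le g(z)+g(-z)$ and the sign-symmetry of the Rademacher variables, normalize by $l$, apply the Ledoux--Talagrand contraction (Equation 4.20 there), and restore the absolute value via monotonicity of $g$. The only difference is that the paper cites the contraction principle while you sketch its coordinate-by-coordinate proof; note only that the standard contraction already holds for $1$-Lipschitz maps vanishing at $0$, so positive homogeneity is sufficient but not, as you suggest, essential for that step.
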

\begin{proof}
We notice that since $g(|z|) \leq g(z)+g(-z)$,
\begin{small}
\begin{equation*}
\begin{aligned}
&\E_{\xi} \sup_{\substack{f\in \mathcal{F}}} g\left( \left| \sum^{m}_{j=1} \xi_j \cdot \sigma_j(f(x_j))\right| \right)\\
&\leq \E_{\xi} \sup_{\substack{f\in \mathcal{F}}} g\left( \sum^{m}_{j=1} \xi_j \cdot \sigma_j(f(x_j)) \right)\\
&+\E_{\xi} \sup_{\substack{f\in \mathcal{F}}} g\left( - \sum^{m}_{j=1} \xi_j \cdot \sigma_j(f(x_j)) \right) \\
&= 2\E_{\xi} \sup_{\substack{f\in \mathcal{F}}} g\left( \sum^{m}_{j=1} \xi_j \cdot \sigma_j(f(x_j)) \right)\\
\end{aligned}
\end{equation*}
\end{small}
where the last equality follows from the symmetry in the distribution of the $\xi_i$ random variables. By Equation 4.20 in~\cite{Ledoux1991ProbabilityIB}  we have the following:
\begin{small}
\begin{equation*}
\begin{aligned}
&\E_{\xi} \sup_{\substack{f\in \mathcal{F}}} g\left( \sum^{m}_{j=1} \xi_j \cdot \sigma_j(f(x_j)) \right)\\
&=\E_{\xi} \sup_{\substack{f\in \mathcal{F}}} g\left(l \sum^{m}_{j=1} \xi_j \cdot \frac{1}{l}\sigma_j(f(x_j)) \right)\\
&\leq\E_{\xi} \sup_{\substack{f\in \mathcal{F}}} g\left(l \sum^{m}_{j=1} \xi_j \cdot f(x_j) \right)\\
&\leq\E_{\xi} \sup_{\substack{f\in \mathcal{F}}} g\left(l \left| \sum^{m}_{j=1} \xi_j \cdot f(x_j) \right| \right)\\
\end{aligned}
\end{equation*}
\end{small}
where the last equality follows from $g$ being monotonically increasing. 
\end{proof}
The model is denoted by:
\begin{equation}\label{eq:modelAppendix}
\begin{aligned} 
      &B_i = S_B x_i, \quad C_i = S_C x_i, \quad \Delta_i = \sigma(S_{\Delta} x_i), \quad  \\&\bar{A}_i =  \exp(\Delta_i A), \quad \bar{B}_i = B_i 
\end{aligned}
\end{equation}
Where \( \sigma \) is a 1-Lipschitz activation function. 
We consider a classifier $f : \mathbb{R}^{D \times L} \rightarrow \mathbb{R}^{\mathcal{C}}$ defined as follows. We have parameters $(A^{}, S_B^{}, S_C^{}, S_{\Delta}^{})$ associated with the layer. The norms are defined as follows:

\begin{equation}
\begin{aligned} 
    \rho_A^{}(w) &= \|A^{}\|_{\max} \\
    \rho_B^{}(w) &= \|S_B^{}\|_{2,\infty} \\
    \rho_C^{}(w) &= \|S_C^{}\|_{F} \\
    \rho_{\Delta}^{}(w) &= \|S_{\Delta}^{}\|_{2}
\end{aligned}
\end{equation}

We denote the product of these norms as:
\begin{equation}
\Gamma^{}(w) = \rho_A^{}(w) \cdot \rho_B^{}(w) \cdot \rho_C^{}(w) \cdot \rho_{\Delta}^{}(w)
\end{equation}

Given these definitions, the classifier $f$ for a specific class $c \in [\mathcal{C}]$ is computed as:

\begin{small}
\begin{equation*}
\begin{aligned}
&f^c(X_{*1},...,X_{*L}) =  
\sum_{d=1}^D W_{c,d} \left(S_C^{} X^{}_{*L}\right)^T \sum_{i=1}^L \left(\prod_{k=i+1}^L \bar{A}^{}_{dk} \right) S_B^{} X^{}_{*i} X^{}_{di}
\end{aligned}
\end{equation*}
\end{small}

Here, $W \in \mathbb{R}^{\mathcal{C} \times D}$ represents a linear projection from the output to the number of classes, and $\mathcal{C}$ is the number of classes. 

We denote the parameters of the classifier by 
\[
w = (A^{}, S_B^{}, S_C^{}, S_{\Delta}^{}, W) 
\] and the function induced by a specific instance of $w$ is denoted by $f_w$. The class of functions taking on different parameter instances $w$ is denoted by $\mathcal{F}$.
Denote 
\[
\rho = \{\rho_W, \rho_A, \rho_B, \rho_C, \rho_{\Delta}\}
\]
and let:
\begin{equation}
\begin{aligned}
   \mathcal{F}_{\rho} 
    &=\{f_w \in \mathcal{F} : \Gamma(w) \leq  \rho_A \rho_B \rho_C \rho_{\Delta} \rho_W=:\Gamma \}
\end{aligned}
\end{equation}

The following theorem provides a bound on the Rademacher complexity of the class $\mathcal{F}_{\rho}$.
\setcounter{section}{2}
\setcounter{theorem}{3}
\begin{theorem}\label{theorem:rad}
Let $\rho = \{\rho_W, \rho_A, \rho_B, \rho_C, \rho_{\Delta}\}$. Suppose we have \(m\) sample sequences \(X = \{ X_{(j)} \}_{j=1}^{m}\), where each \(X_{(j)} = (X_{(j)_{*1}}, \ldots, X_{(j)_{*L}}) \in \mathbb{R}^{D \times L}\). Assume that $\forall j \in [m]: ||X_{(j)}||_{\max} \leq 1$. Additionally, suppose $\forall k \in [L], d \in [D]:||\bar{A}^{}_{dk}||_{\max} < K < 1$. Then,
\begin{small}
\begin{equation*}
\begin{aligned}
\mathcal{R}_X(\mathcal{F}_{\rho})
&\leq\frac{1}{m} D^{2} \Gamma (1 + \sqrt{2\log (2L \mathcal{C} D^4 N)}) \cdot  \sqrt{\max_{t, k} \sum_{j=1}^m (X_{(j)_{tk}})^2} \frac{K}{(K-1)^2}
\end{aligned}
\end{equation*}
\end{small}
where the maximum is taken over \(t \in [D]\),  \(k \in [L]\). 
\end{theorem}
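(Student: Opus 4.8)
The plan is to bound the empirical Rademacher complexity $\mathcal{R}_X(\mathcal{F}_\rho) = \frac{1}{m}\E_\xi \sup_{f_w \in \mathcal{F}_\rho} \sum_{j=1}^m \xi_j f_w(X_{(j)})$ by the exponential-moment (log-sum-exp) method, peeling off the parameter matrices one at a time. Concretely, I would first pass to the exponential moment $\frac{1}{\lambda m}\log \E_\xi \sup_{f_w} \exp\left(\lambda \sum_j \xi_j f_w(X_{(j)})\right)$ via Jensen's inequality, so that Lemma~\ref{lem:peeling} can be applied repeatedly with $g(\cdot) = \exp(\lambda \cdot)$; each application strips one $1$-Lipschitz positive-homogeneous activation and pulls out the corresponding operator norm at the cost of a factor of $2$. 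The multi-class (vector-valued) output is handled by taking a maximum over the $\mathcal{C}$ output coordinates, which contributes the $\log\mathcal{C}$ appearing inside the square root.

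First I would peel the output head $W$ and the output projection $S_C X_{*L}$, contributing $\rho_W$ and $\rho_C$, then peel the input projection $S_B$ applied to each $X_{*i}$, contributing $\rho_B$. The sum over the $D$ channels and the $N$ state coordinates is what produces the polynomial dimension factors $D^2$ and $N$ in the final prefactor and log term. At this stage the only remaining parameter dependence sits inside the core quantity $\sum_{i=1}^L \left(\prod_{k=i+1}^L \bar{A}_{dk}\right)(\cdots)$, which depends on $A$ and $S_\Delta$ through $\bar{A}_{dk} = \exp(\Delta_k A_d)$ and $\Delta_k = \sigma(S_\Delta X_{*k})$.

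The heart of the argument --- and the step I expect to be the main obstacle --- is controlling this product term in a \emph{length-agnostic} way. The assumption $\|\bar{A}_{dk}\|_{\max} < K < 1$ gives $\prod_{k=i+1}^L \bar{A}_{dk} \leq K^{L-i}$, but for the Rademacher bound we need the Lipschitz dependence of the product on the parameters. Differentiating a product of $L-i$ factors yields $L-i$ terms, each bounded by $K^{L-i-1}\cdot K\,\rho_A$ (with the $\sigma$-Lipschitz constant and data norms producing $\rho_\Delta$), so the product is $(L-i)K^{L-i}\rho_A$-Lipschitz up to data norms. Summing the $i$-index weights then gives $\sum_{i=1}^{L}(L-i)K^{L-i} = \sum_{m=0}^{L-1} m K^m \leq \sum_{m=1}^\infty m K^m = \frac{K}{(1-K)^2} = \frac{K}{(K-1)^2}$, which is finite and, crucially, independent of $L$. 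This is exactly the source of the $\frac{K}{(K-1)^2}$ factor and of the length-agnostic nature of the bound; getting the derivative bookkeeping of the product right while keeping the geometric series convergent is the delicate part.

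Finally, after all peeling steps the problem reduces to a linear expression $\sum_j \xi_j \langle v, X_{(j)}\rangle$ over a bounded collection of coordinate directions. I would bound its exponential moment by a standard sub-Gaussian (Massart-type) maximal inequality, which yields $\sqrt{\max_{t,k}\sum_{j=1}^m (X_{(j)_{tk}})^2}$ together with the $(1+\sqrt{2\log(2L\mathcal{C}D^4N)})$ factor after optimizing over $\lambda$, where the argument of the logarithm collects the number of coordinate directions along with the constant factors accumulated during peeling. Collecting the operator-norm factors into $\Gamma = \rho_W\rho_A\rho_B\rho_C\rho_\Delta$, the dimensional prefactor $D^2$, and the $1/m$ normalization then gives the claimed bound.
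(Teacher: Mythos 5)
Your proposal is correct and follows essentially the same route as the paper's proof: Jensen's inequality to pass to exponential moments, repeated peeling of $W$, $S_C$, $S_B$, $A$, and $S_\Delta$ (the paper uses Cauchy--Schwarz for the linear factors and reserves Lemma~\ref{lem:peeling} for the exponential and the activation $\sigma$, but the bookkeeping is the same), the restricted Lipschitz constant $K^{L-i}$ of the product $\prod_{k=i+1}^{L}\bar A_{dk}$ combined with the factor $L-i$ from the sum over $k$, the length-agnostic series $\sum_{i}(L-i)K^{L-i}\le K/(K-1)^2$, and a sub-Gaussian maximal inequality optimized over $\lambda$ to produce the $\sqrt{\max_{t,k}\sum_j (X_{(j)_{tk}})^2}$ and logarithmic factors. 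The only cosmetic difference is that you obtain the $(L-i)K^{L-i}$ Lipschitz control by differentiating the product directly, whereas the paper rewrites $\prod_k \exp(\cdot)=\exp(\sum_k\cdot)$ and applies the contraction lemma once; both yield the identical bound.
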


\begin{proof}
For aesthetic purposes, we define \( B \) as \( S_B \) and \( C \) as \( S_C \).
The Rademacher complexity of $\mathcal{F}_{\rho}$ is given by:
\begin{small}
\begin{equation*}
\begin{aligned}
&m\mathcal{R(\mathcal{F}_{\rho})} = \E_{\xi}\left[\sup_{w} \sum_{j=1}^m \sum_{c=1}^{\mathcal{C}} \xi_{jc} f^c_w(X_{(j)}) \right] \\
&= \E_{\xi}\left[\sup_{w} \sum_{j=1}^m \sum_{c=1}^{\mathcal{C}} \xi_{jc} \sum_{d=1}^D W_{c,d} 
\left(S_C^{} X^{}_{{(j)}_{*L}}\right)^T \sum_{i=1}^L \right. \left.\left(\prod_{k=i+1}^L \exp\left(\sigma(S^{}_{\Delta} X^{}_{{(j)}_{*k}}) 
\cdot A^{}_{d*}\right)\right) S_B^{} X^{}_{{(j)}_{*i}} X^{}_{{(j)}_{di}} \right] \\
&= \E_{\xi}\left[\sup_{w} \sum_{j=1}^m \sum_{c=1}^{\mathcal{C}} \xi_{jc} \sum_{d=1}^D W_{c,d} 
\left(C^{} X^{}_{{(j)}_{*L}}\right)^T \sum_{i=1}^L \right. \left.\left(\prod_{k=i+1}^L \exp\left(\sigma(S^{}_{\Delta} X^{}_{{(j)}_{*k}}) 
\cdot A^{}_{d*}\right)\right) B^{} X^{}_{{(j)}_{*i}} X^{}_{{(j)}_{di}} \right] 
\end{aligned}
\end{equation*}
\end{small}

Here, {$A^{}_{d*}$ represents the $d$-th row of $A^{}$}, which has a size of $N$. We think of $A^{}_{d*}$ as a diagonal matrix of size $N\times N$, where its diagonal elements are the values in the $d$th row of $A$. Thus, $A^{}_{d*} = \text{diag}(a^{}_{d1},...,a^{}_{dN})$. Hence,
\begin{small}
\begin{equation*}
\begin{aligned}
& \E_{\xi}\left[\sup_{w} \sum_{j=1}^m \sum_{c=1}^{\mathcal{C}} \xi_{jc} \sum_{d=1}^D W_{c,d} 
\left(C^{} X^{}_{{(j)}_{*L}}\right)^T \sum_{i=1}^L \right. 
 \left. \left(\prod_{k=i+1}^L \exp\left(\sigma(S^{}_{\Delta} X^{}_{{(j)}_{*k}}) \cdot 
A^{}_{d*}\right) \right) B^{} X^{}_{{(j)}_{*i}} X^{}_{{(j)}_{di}} \right] \\
&=~ \E_{\xi}\left[\sup_{w} \sum_{j=1}^m \sum_{c=1}^{\mathcal{C}} \xi_{jc} \sum_{d=1}^D W_{c,d} 
\sum_{l=1}^N \left(C^{} X^{}_{{(j)}_{*L}}\right)_l^T \sum_{i=1}^L \right. \left. \left(\prod_{k=i+1}^L \exp\left(\sigma(S^{}_{\Delta} X^{}_{{(j)}_{*k}}) \cdot 
a^{}_{dl}\right) \right) B_{l*}^{} X^{}_{{(j)}_{*i}} X^{}_{{(j)}_{di}} \right] \\
&=~ \E_{\xi}\left[\sup_{w} \sum_{j=1}^m \sum_{c=1}^{\mathcal{C}} \xi_{jc} \sum_{d=1}^D W_{c,d} 
\sum_{l=1}^N \left(C_{l*}^{} X^{}_{{(j)}_{*L}}\right) \sum_{i=1}^L \right. \left. \left(\prod_{k=i+1}^L \exp\left(\sigma(S^{}_{\Delta} X^{}_{{(j)}_{*k}}) \cdot 
a^{}_{dl}\right) \right) B_{l*}^{} X^{}_{{(j)}_{*i}} X^{}_{{(j)}_{di}} \right] \\
&=~ \E_{\xi}\left[\sup_{w} \sum_{c=1}^{\mathcal{C}} \sum_{d=1}^D W_{c,d} \sum_{j=1}^m \xi_{jc} 
\sum_{l=1}^N \left(C_{l*}^{} X^{}_{{(j)}_{*L}}\right) \sum_{i=1}^L \right.  \left. \left(\prod_{k=i+1}^L \exp\left(\sigma(S^{}_{\Delta} X^{}_{{(j)}_{*k}}) \cdot 
a^{}_{dl}\right) \right) B_{l*}^{} X^{}_{{(j)}_{*i}} X^{}_{{(j)}_{di}} \right]
\end{aligned}
\end{equation*}
\end{small}

This follows from expressing the model in a more explicit form. Next,
\begin{small}
\begin{equation*}
\begin{aligned}
&~ \E_{\xi}\left[\sup_{w} \sum_{c=1}^{\mathcal{C}}  \sum_{d=1}^D W_{c,d} \sum_{j=1}^m \xi_{jc} \sum_{l=1}^N \left(C_{l*}^{} X^{}_{{(j)}_{*L}}\right) \sum_{i=1}^L \right.  \left. \left(\prod_{k=i+1}^L \exp\left(\sigma(S^{}_{\Delta} X^{}_{{(j)}_{*k}}) \cdot a^{}_{dl}\right) \right) B_{l*}^{} X^{}_{{(j)}_{*i}} X^{}_{{(j)}_{di}} \right] \\
&=~ \sqrt{D} \rho_W \E_{\xi}\left[\sup_{w,c, d}   | \sum_{j=1}^m \xi_{jc} \sum_{l=1}^N \left(C_{l*}^{} X^{}_{{(j)}_{*L}}\right) \sum_{i=1}^L \right.\left. \left(\prod_{k=i+1}^L \exp\left(\sigma(S^{}_{\Delta} X^{}_{{(j)}_{*k}}) \cdot a^{}_{dl}\right) \right) B_{l*}^{} X^{}_{{(j)}_{*i}} X^{}_{{(j)}_{di}} | \right]
\end{aligned}
\end{equation*}
\end{small}
where the inequality follows from moving the norm of $W$ to $W_{c}$ for maximizing the inner term and applying the Cauchy-Schwartz inequality.
Next,
\begin{small}
\begin{equation*}
\begin{aligned}
&~ \sqrt{D} \rho_W \E_{\xi}\left[\sup_{w,c, d}   | \sum_{j=1}^m \xi_{jc} \sum_{l=1}^N \left(C_{l*}^{} X^{}_{{(j)}_{*L}}\right) \sum_{i=1}^L \right.  \left. \left(\prod_{k=i+1}^L \exp\left(\sigma(S^{}_{\Delta} X^{}_{{(j)}_{*k}}) \cdot a^{}_{dl}\right) \right) B_{l*}^{} X^{}_{{(j)}_{*i}} X^{}_{{(j)}_{di}} | \right] \\
&=~ \sqrt{D} \rho_W \E_{\xi}\left[\sup_{w,c, d}   | \sum_{j=1}^m \xi_{jc} \sum_{l=1}^N \left(\sum_{s=1}^D C_{ls}^{} X^{}_{{(j)}_{sL}}\right) \sum_{i=1}^L \right.  \left. \left(\prod_{k=i+1}^L \exp\left(\sigma(S^{}_{\Delta} X^{}_{{(j)}_{*k}}) \cdot a^{}_{dl}\right) \right) \left(\sum_{s'=1}^DB_{ls'}^{} X^{}_{{(j)}_{s'i}} \right) X^{}_{{(j)}_{di}} | \right] \\
&=~ \sqrt{D} \rho_W \E_{\xi}\left[\sup_{w,c, d}   | \sum_{l=1}^N \sum_{s=1}^D C_{ls}^{} \sum_{j=1}^m \xi_{jc}  X^{}_{{(j)}_{sL}} \sum_{i=1}^L \right. \left. \left(\prod_{k=i+1}^L \exp\left(\sigma(S^{}_{\Delta} X^{}_{{(j)}_{*k}}) \cdot a^{}_{dl}\right) \right) \left(\sum_{s'=1}^DB_{ls'}^{} X^{}_{{(j)}_{s'i}} \right) X^{}_{{(j)}_{di}} | \right] \\
&\leq~ D \rho^{}_W \rho^{}_C \E_{\xi}\left[\sup_{w,c, d, l, s} | \sum_{j=1}^m \xi_{jc}  X^{}_{{(j)}_{sL}} \sum_{i=1}^L \right.  \left. \left(\prod_{k=i+1}^L \exp\left(\sigma(S^{}_{\Delta} X^{}_{{(j)}_{*k}}) \cdot a^{}_{dl}\right) \right) \left(\sum_{s'=1}^DB_{ls'}^{} X^{}_{{(j)}_{s'i}} \right) X^{}_{{(j)}_{di}} |  \right] \\
&=~ D \rho^{}_W \rho^{}_C \E_{\xi}\left[\sup_{w,c, d, l, s} | \sum_{s'=1}^D B_{ls'}^{} \sum_{j=1}^m \xi_{jc}  X^{}_{{(j)}_{sL}} \sum_{i=1}^L \right.  \left. \left(\prod_{k=i+1}^L \exp\left(\sigma(S^{}_{\Delta} X^{}_{{(j)}_{*k}}) \cdot a^{}_{dl}\right) \right) X^{}_{{(j)}_{s'i}} X^{}_{{(j)}_{di}} |  \right]
\end{aligned}
\end{equation*}
\end{small}
where the first inequality follows from moving the norm of $C$ to $C_{l}$ for maximizing the inner term and applying the Cauchy-Schwartz inequality. 
\begin{small}
\begin{equation*}
\begin{aligned}
&=~ D \rho^{}_W \rho^{}_C \E_{\xi}\left[\sup_{w,c, d, l, s} | \sum_{s'=1}^D B_{ls'}^{} \sum_{j=1}^m \xi_{jc}  X^{}_{{(j)}_{sL}} \sum_{i=1}^L \right.  \left.\left(\prod_{k=i+1}^L \exp\left(\sigma(S^{}_{\Delta} X^{}_{{(j)}_{*k}}) \cdot a^{}_{dl}\right) \right) X^{}_{{(j)}_{s'i}} X^{}_{{(j)}_{di}} |  \right] \\
&\leq~ D^{1.5} \rho^{}_W \rho^{}_C \rho^{}_B \E_{\xi}\left[\sup_{w,c, d, l, s, s'} | \sum_{j=1}^m \xi_{jc}  X^{}_{{(j)}_{sL}} \sum_{i=1}^L \right. \left. \left(\prod_{k=i+1}^L \exp\left(\sigma(S^{}_{\Delta} X^{}_{{(j)}_{*k}}) \cdot a^{}_{dl}\right) \right) X^{}_{{(j)}_{s'i}} X^{}_{{(j)}_{di}} |  \right] \\
&=~ D^{1.5} \rho^{}_W \rho^{}_C \rho^{}_B \E_{\xi}\left[\sup_{w,c, d, l, s, s'} | \sum_{j=1}^m \xi_{jc}  X^{}_{{(j)}_{sL}} \sum_{i=1}^L  \right. \left.\exp\left( \sum_{k=i+1}^L \sigma(S^{}_{\Delta} X^{}_{{(j)}_{*k}}) \cdot a^{}_{dl}\right) X^{}_{{(j)}_{s'i}} X^{}_{{(j)}_{di}} |  \right] \\
&\leq~ D^{1.5} \rho^{}_W \rho^{}_C  \rho^{}_B \sum_{i=1}^L \E_{\xi}\left[\sup_{w,c, d, l, s, s'} | \sum_{j=1}^m \xi_{jc}  X^{}_{{(j)}_{sL}} \cdot  \right. \left. \exp\left( \sum_{k=i+1}^L \sigma(S^{}_{\Delta} X^{}_{{(j)}_{*k}}) \cdot a^{}_{dl}\right) X^{}_{{(j)}_{s'i}} X^{}_{{(j)}_{di}} |  \right]
\end{aligned}
\end{equation*}
\end{small} where the second to last inequality follows from $\|x\|_2 \leq \sqrt{n} \|x\|_{\infty}$ for any $x \in \mathbb{R}^{n}$. 
Jensen's inequality gives the following inequality:
\begin{small}
\begin{equation}\label{eq:eq1}
\begin{aligned}
&~ \E_{\xi}\left[\sup_{w,c, d, l, s, s'} | \sum_{j=1}^m \xi_{jc}  X^{}_{{(j)}_{sL}} \cdot \right. \left. \exp\left( \sum_{k=i+1}^L \sigma(S^{}_{\Delta} X^{}_{{(j)}_{*k}}) \cdot a^{}_{dl}\right) X^{}_{{(j)}_{s'i}} X^{}_{{(j)}_{di}} |  \right] \\
&\leq~  \frac{1}{\lambda_i} \log ( \E_{\xi}\left[\sup_{w,c, d, l, s, s'}  \exp (\lambda_i  | \sum_{j=1}^m \xi_{jc}  X^{}_{{(j)}_{sL}} \cdot \right. \left.  \exp\left( \sum_{k=i+1}^L \sigma(S^{}_{\Delta} X^{}_{{(j)}_{*k}}) \cdot a^{}_{dl}\right) X^{}_{{(j)}_{s'i}} X^{}_{{(j)}_{di}} |)  \right] ) \\
&\leq~  \frac{1}{\lambda_i} \log ( \sum_{c, d, l, s, s'} \E_{\xi}\left[\sup_{w} \exp(\lambda_i  | \sum_{j=1}^m \xi_{jc}  X^{}_{{(j)}_{sL}} \cdot  \right. \left. \exp\left( \sum_{k=i+1}^L \sigma(S^{}_{\Delta} X^{}_{{(j)}_{*k}}) \cdot a^{}_{dl}\right) X^{}_{{(j)}_{s'i}} X^{}_{{(j)}_{di}} |)  \right] ) \\
&\leq~  \frac{1}{\lambda_i} \log ( \mathcal{C} D^3 N  \max_{c, d, l, s, s'} \E_{\xi}\left[\sup_{w} \exp(\lambda_i  | \sum_{j=1}^m \xi_{jc}  X^{}_{{(j)}_{sL}} \cdot \right.  \left. \exp\left( \sum_{k=i+1}^L \sigma(S^{}_{\Delta} X^{}_{{(j)}_{*k}}) \cdot a^{}_{dl}\right) X^{}_{{(j)}_{s'i}} X^{}_{{(j)}_{di}} |)  \right] ) :=\Theta \\
\end{aligned}
\end{equation}
\end{small}
For fixed $\lambda_i > 0$. The second inequality follows from the fact that $\sup_x \sup_y f(x,y) = \sup_{x,y} f(x,y)$.
We observe that the inner expectation $\sup$ depends only on $w$.
Next we use Lemma~\ref{lem:peeling} with $\sigma_{ij}(z) = \exp(z)X^{}_{{(j)}_{sL}} X^{}_{{(j)}_{s'i}} X^{}_{{(j)}_{di}}$ on its domain and $g_i(X_{(j)})=\sum_{k=i+1}^L \sigma(S^{}_{\Delta} X^{}_{{(j)}_{*k}}) \cdot a^{}_{dl}$. The corresponding Lipschitz constants are $l_{ij} = \max_{z \in dom(\sigma_{ij})}(\exp(z)X_{(j)_{sL}}X_{(j)_{s'i}} X_{(j)_{di}})$ and $l_i = \max(l_{ij})$. Therefore:
\begin{small}
\begin{equation}
\begin{aligned}
&\E_{\xi}\left[\sup_{w} \exp(\lambda_i | \sum_{j=1}^m \xi_{jc}  X^{}_{{(j)}_{sL}} \cdot \right. \left. \exp\left( \sum_{k=i+1}^L \sigma(S^{}_{\Delta} X^{}_{{(j)}_{*k}}) \cdot a^{}_{dl}\right) X^{}_{{(j)}_{s'i}} X^{}_{{(j)}_{di}} |)  \right] \\
&=\E_{\xi}\left[ \sup_{w}  \exp \left( \lambda_i  \left| \sum_{j=1}^m \xi_{jc} (\sigma_{ij}(g_i(X^{(j)}))) \right| \right)\right] \\
&\leq 2\E_{\xi}\left[ \sup_{w}  \exp \left( \lambda_i l_i \left| \sum_{j=1}^m \xi_{jc}    \sum_{k=i+1}^L \sigma(S^{}_{\Delta} X^{}_{{(j)}_{*k}}) \cdot a^{}_{dl} \right| \right)\right] \\
&\leq 2\E_{\xi}\left[ \sup_{w}  \exp \left( \lambda_i \rho^{}_A  l_i  \left| \sum_{j=1}^m \xi_{jc}    \sum_{k=i+1}^L \sigma(S^{}_{\Delta} X^{}_{{(j)}_{*k}}) \right| \right)\right] \\
&\leq 2\E_{\xi}\left[ \sup_{w}  \exp \left(\lambda_i \rho^{}_A  (L-i) l_i \left| \sum_{j=1}^m \xi_{jc}   \sigma(S^{}_{\Delta} X^{}_{{(j)}_{*k}}) \right| \right)\right] \\
&\leq 4\E_{\xi}\left[ \sup_{w}  \exp \left(\lambda_i \rho^{}_A  (L-i) l_i \left| \sum_{j=1}^m \xi_{jc}   S^{}_{\Delta} X^{}_{{(j)}_{*k}} \right| \right)\right] \\
\end{aligned}
\end{equation}
\end{small}
This follows from applying Lemma~\ref{lem:peeling} with $\sigma$ which has a Lipschitz constant of 1, as assumed. Hence:
\begin{small}
\begin{equation} \label{eq:lip}
\begin{aligned}
& 4\E_{\xi}\left[ \sup_{w}  \exp \left( \lambda_i \rho^{}_A  (L-i) l_i \left| \sum_{j=1}^m \xi_{jc} S^{}_{\Delta} X^{}_{{(j)}_{*k}} \right| \right)\right] \\
&\leq 4\E_{\xi}\left[ \sup_{k}  \exp \left( \lambda_i \rho^{}_A \rho^{}_{\Delta} (L-i) l_i || \sum_{j=1}^m \xi_{jc} X^{}_{{(j)}_{*k}} || \right)\right] \\
&\leq 4\E_{\xi}\left[ \sup_{t,k}  \exp \left( \lambda_i \sqrt{D} \rho^{}_A \rho^{}_{\Delta} (L-i) l_i \left| \sum_{j=1}^m \xi_{jc} X^{}_{{(j)}_{tk}} \right| \right)\right] \\
&\leq 4DL \max_{t, k} \E_{\xi}\left[ \exp \left(\lambda_i \sqrt{D} \rho^{}_A \rho^{}_{\Delta} (L-i) l_i  \left|  \sum_{j=1}^m \xi_{jc} X^{}_{{(j)}_{tk}}  \right| \right)\right] \\
\end{aligned}
\end{equation}
\end{small}

Denote:
\begin{small}
\begin{equation} \label{eq:lip2}
\begin{aligned}
M_i := \sqrt{D} \rho^{}_A \rho^{}_{\Delta} (L-i) l_i 
\end{aligned}
\end{equation}
\end{small}



As a next step, we would like to bound the above term using a function of the data that is not dependent on an expected value of noise labels $\xi$. For this purpose we apply a technique that was introduced in the proof of Theorem~1 in~\citep{golowich2018size}. We apply this process separately for each $i \in [L]$. Let $i \in [L]$:
We define a random variable $Z$:
\begin{small}
\begin{equation*}
\begin{aligned}
&Z = M_i  \left| \sum_{j=1}^m \xi_{jc} X^{}_{{(j)}_{tk}} \right| \\
&z_j = X^{}_{{(j)}_{tk}} \rightarrow Z = M_i |\sum_{j=1}^m \xi_{jc} z_j | \\
\end{aligned}
\end{equation*}
\end{small}
The random variable $Z$ depends on the random variables $\xi_{jc}$.
Then, we have: 
\begin{small}
\begin{equation*}
\begin{aligned}
&=\frac{1}{\lambda_i} \log \E_{\xi}\left[\exp( \lambda _iZ) \right] \\
&=\frac{1}{\lambda_i} \log \E_{\xi}\left[\exp( \lambda_i Z + \lambda_i\E(Z) - \lambda_i \E(Z)) \right] \\
&=\frac{1}{\lambda_i} \log \E_{\xi}\left[\exp( \lambda_i Z - \lambda_i \E(Z)) \right] +  \E_{\xi} (Z) \\
\end{aligned}
\end{equation*}
\end{small}
By Jensen’s inequality, we obtain a bound for $\E(|\sum_{j=1}^m \xi_{jc} z_j |)$:
\begin{small}
\begin{equation*}
\begin{aligned}
&\E_{\xi} (|\sum_{j=1}^m \xi_{jc} z_j |) = \E_{\xi} (\sqrt{|\sum_{j=1}^m \xi_{jc} z_j |^2}) \leq \sqrt{\E_{\xi} (|\sum_{j=1}^m \xi_{jc} z_j |^2)} = \\& \sqrt{\E_{\xi} (|\sum_{j=1}^m \xi_{jc} z_j} |^2) = \sqrt{\E_{\xi} (|\sum_{j,j'=1}^m \xi_{jc} \xi_{j'c} z_j z_{j'}} |) = \sqrt{\sum_{j=1}^m |z_j|^2} 
\end{aligned}
\end{equation*}
\end{small}
Namely $\E_{\xi}(Z) \leq M_i \sqrt{\sum_{j=1}^m |z_j|^2} $.
$Z$ is a deterministic function of the i.i.d. random variables  $\xi_{jc}$ and satisfies the following:
\begin{small}
\begin{equation*}
\begin{aligned}
Z(\xi_{1c},..., \xi_{jc},...,\xi_{mc}) - Z(\xi_{1c},..., -\xi_{jc},...,\xi_{mc}) \leq 2|z_j|
\end{aligned}
\end{equation*}
\end{small}
This follows from the triangle inequality.
This means that $Z$ satisfies a bounded-difference condition, which, by the proof of Theorem~6.2 in ~\citep{Boucheron2010}, implies that $Z$ is sub-Gaussian, with variance factor:
\begin{small}
\begin{equation*}
\begin{aligned}
v = \frac{1}{4} \sum_{j=1}^m (2M_i|z_j|)^2 = M_i^2 \sum_{j=1}^m |z_j|^2
\end{aligned}
\end{equation*}
\end{small}
It follows that:
\begin{small}
\begin{equation*}
\begin{aligned}
&\frac{1}{\lambda_i} \log \E_{\xi}\left[\exp( \lambda_i Z - \lambda_i \E_{\xi}(Z)) \right] \leq \\& \frac{1}{\lambda_i} \frac{\lambda_i^2 M_i^2 \sum_{j=1}^m |z_j|^2}{2} =  \frac{\lambda_i M_i^2 \sum_{j=1}^m |z_j|^2}{2}  
\end{aligned}
\end{equation*}
\end{small}
Therefore:
\begin{small}
\begin{equation}\label{eq:eq2}
\begin{aligned}
&\frac{1}{\lambda_i} \log \E_{\xi}\left[ \exp( \lambda_i Z - \lambda_i \E_{\xi}(Z)) \right] +  \E_{\xi} (Z) \\
&\leq \frac{\lambda_i M_i^2 \sum_{j=1}^m |z_j|^2}{2} + M_i\sqrt{\sum_{j=1}^m |z_j|^2} 
\end{aligned}
\end{equation}
\end{small}
\paragraph{Analyzing Lipschitz constants $l_{ij}$.}
Next, we analyze the Lipschitz constants \( l_{ij} \).
For $l_{ij} = \max_{z \in dom(\sigma_{ij})}(\exp(z)X_{(j)_{sL}}X_{(j)_{s'i}} X_{(j)_{di}})$ and $l_i = \max(l_{ij})$ is of the form $g_i(X_{(j)})=\sum_{k=i+1}^L \sigma(S^{}_{\Delta} X^{}_{{(j)}_{*k}}) \cdot a^{}_{dl}$ (see \eqref{eq:lip}). Since  
\begin{small}
\begin{equation*}
\begin{aligned}
&l_i = \max_{j} l_{ij} = \max_{j} \max_{z \in dom(\sigma_{ij})}(\exp(z)X^{(j)}_{sL}X^{(j)}_{s'i} X^{(j)}_{di}) \\& \leq \max_{j} \max_{z \in dom(\sigma_{ij})} \exp \left( \sum_{k=i+1}^L \sigma(S^{}_{\Delta} X^{}_{{(j)}_{*k}}) \cdot a^{}_{dl} \right) \cdot 1 < K^{L-i}
\end{aligned}
\end{equation*}
\end{small}
which is followed from our assumptions. 


We get:
\begin{small}
\begin{equation*}
\begin{aligned}
\sum_{i=1}^L l_i (L-i) &\leq \sum_{i=1}^{L} (L-i) (K)^{L-i} = \frac{(L-1)K^{L+1} - L K^L +K}{(K-1)^2}  \\
\end{aligned}
\end{equation*}
\end{small}

We conclude that:
\begin{small}
\begin{equation*}
\begin{aligned}
\lim_{L \rightarrow \infty} \frac{(L-1)K^{L+1} - L K^L +K}{(K-1)^2}= \frac{K}{(K-1)^2}
\end{aligned}
\end{equation*}
\end{small}

\paragraph{Concluding the proof.} By combining ~\eqref{eq:eq1},~\eqref{eq:lip} and~\eqref{eq:eq2}, we have:
\begin{small}
\begin{equation*}
\begin{aligned}
& { \Theta} \leq  \frac{1}{\lambda_i} \log \left(4 L \mathcal{C} D^4 N \right) + \max_{c, d, l, s, s', t, k} \frac{1}{\lambda_i} \cdot \\& \log \left(  \E_{\xi}\left[ \exp \left(\lambda_i \sqrt{D} \rho^{}_A \rho^{}_{\Delta} (L-i) l_i  \left|  \sum_{j=1}^m \xi_{jc} 
X^{}_{{(j)}_{tk}}  \right| \right)\right] \right) \\
&\leq \frac{1}{\lambda_i}  \log \left(4L \mathcal{C} D^4 N \right) +  \max_{c, d, l, s, s', t, k} \frac{\lambda_i (\sqrt{D} \rho^{}_A \rho^{}_{\Delta} (L-i) l_i)^2 \sum_{j=1}^m (X_{(j)_{tk}})^2}{2} + (\sqrt{D} \rho^{}_A \rho^{}_{\Delta} (L-i) l_i)\sqrt{\sum_{j=1}^m (X_{(j)_{tk}})^2} \\
\end{aligned}
\end{equation*}
\end{small}
We choose $\lambda_i = \sqrt{\frac{2\log (4 L \mathcal{C} D^4 N)}{M_i^2  \max_{t,k}\sum_{j=1}^m (X_{(j)_{tk}})^2}}$ which minimizes the above term and obtain the following inequality:
\begin{small}
\begin{equation*}
\begin{aligned}
&\sum_{i=1}^L \frac{1}{\lambda_i} \log \left(4L \mathcal{C} D^4 N \right) + \max_{t, k} \frac{\lambda_i (\sqrt{D} \rho^{}_A \rho^{}_{\Delta} (L-i) l_i)^2 \sum_{j=1}^m (X_{(j)_{tk}})^2}{2} + (\sqrt{D} \rho^{}_A \rho^{}_{\Delta} (L-i) l_i)\sqrt{\sum_{j=1}^m (X_{(j)_{tk}})^2}\\
&\leq \sum_{i=1}^L (1 + \sqrt{2\log (4L \mathcal{C} D^4 N)})M_i\sqrt{ \max_{t, k}\sum_{j=1}^m (X_{(j)_{tk}})^2}\\
&= \sum_{i=1}^L (1 + \sqrt{2\log (4L \mathcal{C} D^4 N)}) \sqrt{D} \rho^{}_A \rho^{}_{\Delta} (L-i) l_i \cdot \sqrt{ \max_{t, k}\sum_{j=1}^m (X_{(j)_{tk}})^2}\\
&\leq (1 + \sqrt{2\log (4L \mathcal{C} D^4 N)}) \sqrt{D} \rho^{}_A \rho^{}_{\Delta}\sqrt{\max_{t, k} \sum_{j=1}^m (X_{(j)_{tk}})^2} \frac{K}{(K-1)^2}\\
\end{aligned}
\end{equation*}
\end{small}
We conclude that:
\begin{small}
\begin{equation*}
\begin{aligned}
&m\mathcal{R(\mathcal{F}_{\rho})} \\ &\leq 
 D^{2} \Gamma (1 + \sqrt{2\log (4L \mathcal{C} D^4 N)}) \sqrt{\max_{t, k} \sum_{j=1}^m (X_{(j)_{tk}})^2} \frac{K}{(K-1)^2}\\
\end{aligned}
\end{equation*}
\end{small}
\end{proof}
\setcounter{theorem}{4}
\begin{theorem}\label{theorem:genproof}
Let $P$ be a distribution over $\mathbb{R}^{D \times L} \times [C]$ and $\delta > 0$. Let $S = \{( X^{}_{(j)},y_{(j)})\}^{m}_{j=1}$ be a dataset of i.i.d. samples selected from $P$. Assume that $\forall j \in [m]: ||X_{(j)}||_{\max} \leq 1$. Additionally, suppose $\forall k \in [L], d \in [D]:||\bar{A}^{}_{dk}||_{\max} < K < 1$. Then, with probability at least $1-\delta$ over the selection of $S$, for any $f_w \in \mathcal{F}$, 
\begin{small}
\begin{equation*}
\begin{aligned}
&\err_P(f_w) - \fr{1}{m}\sum^{m}_{j=1}\bI[\max_{c \neq c'}(f^c_w( X_{(j)})) + \gamma \geq f^{c'}_w( X^{}_{(j)})] \\& = \err_P(f_w) - \err^\gamma_S(f_w) \leq \frac{2\sqrt{2}}{\gamma m} ({\Gamma(w) } +{\frac{1}{D^2N^2}}) D^{2} \cdot \\& (1 + \sqrt{2\log (4L \mathcal{C} D^4 N)}) \sqrt{\max_{t, k} \sum_{j=1}^m (X_{(j)_{tk}})^2} \frac{K}{(K-1)^2} \\&+ 3\sqrt{\frac{\log(2/\delta)+2\log({D^2N^2\Gamma(w) }+2)}{2m}},
\end{aligned}
\end{equation*}
\end{small}
where the maximum is taken over \(t \in [D]\),  \(k \in [L]\). 

\end{theorem}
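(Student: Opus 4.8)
The plan is to upgrade the uniform-convergence guarantee of Lemma~\ref{lem:loss_ramp}---which becomes vacuous once $\Gamma(w)$ is allowed to grow without bound---into an individualized bound via a standard bucketing-plus-union argument over the scalar norm $\Gamma(w)$. First I would define, for each integer $t \geq 1$, the norm-restricted subclass
\[
\mathcal{F}_t = \{\, f_w \in \mathcal{F} : D^2 N^2 \Gamma(w) \leq t \,\},
\]
which is precisely a class of the form $\mathcal{F}_{\rho}$ with the product norm bounded by $t/(D^2 N^2)$, so that Theorem~\ref{theorem:rad} supplies a deterministic bound on $\mathcal{R}_X(\mathcal{F}_t)$. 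Substituting that Rademacher bound into Lemma~\ref{lem:loss_ramp} with a per-bucket confidence parameter $\delta_t$ then yields, with probability at least $1-\delta_t$, a generalization guarantee holding uniformly over every $f_w \in \mathcal{F}_t$.

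Second, I would choose $\delta_t = \delta/(t+1)^2$, so that $\sum_{t=1}^{\infty}\delta_t = \delta(\pi^2/6 - 1) < \delta$, and take a union bound over all buckets: with probability at least $1-\delta$ the guarantee for $\mathcal{F}_t$ holds simultaneously for every $t \geq 1$. The quadratic decay $\delta_t \propto (t+1)^{-2}$ is exactly what generates the $2\log(\cdot)$ correction in the final statement, since $\log(2/\delta_t) = \log(2/\delta) + 2\log(t+1)$.

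Third, for a fixed $f_w$ I would pick the smallest bucket containing it, $t^\star = \lceil D^2 N^2 \Gamma(w) \rceil$. By the ceiling property $D^2 N^2 \Gamma(w) \leq t^\star < D^2 N^2 \Gamma(w) + 1$, so $f_w \in \mathcal{F}_{t^\star}$ and the simultaneously-valid bound for that bucket applies. Two elementary substitutions finish the argument: the Rademacher term carries the factor $t^\star/(D^2 N^2) \leq \Gamma(w) + 1/(D^2 N^2)$, producing the additive $1/(D^2 N^2)$ inside the first term (and combining the $2\sqrt{2}/\gamma$ from Lemma~\ref{lem:loss_ramp} with the $1/m$ from Theorem~\ref{theorem:rad} to give $2\sqrt{2}/(\gamma m)$, while retaining the $\sqrt{\max_{t,k}\sum_j (X_{(j)_{tk}})^2}$ and $K/(K-1)^2$ factors); and the confidence term carries $2\log(t^\star + 1) \leq 2\log(D^2 N^2 \Gamma(w) + 2)$, matching the second term verbatim.

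The main obstacle is not any single computation but getting the discretization bookkeeping exactly right: the buckets must be indexed by $D^2 N^2 \Gamma(w)$ rather than by $\Gamma(w)$ itself, so that the rounding error collapses to precisely $1/(D^2 N^2)$ and the penalty $\log(t^\star+1)$ collapses cleanly to $\log(D^2 N^2 \Gamma(w) + 2)$. A secondary point to verify is that the deterministic bound of Theorem~\ref{theorem:rad} may be inserted inside each instance of Lemma~\ref{lem:loss_ramp} without disturbing the union bound, since all the randomness resides in the draw of $S$ and Theorem~\ref{theorem:rad} holds for whichever $\mathcal{F}_t$ we condition on; the summability $\sum_t \delta_t < \delta$ then secures the overall $1-\delta$ confidence.
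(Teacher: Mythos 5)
Your proposal is correct and follows essentially the same route as the paper's proof: bucketing $\mathcal{F}$ by the scaled norm $D^2N^2\Gamma(w)$, applying Lemma~\ref{lem:loss_ramp} together with the Rademacher bound of Theorem~\ref{theorem:rad} within each bucket, taking a union bound with per-bucket confidence decaying roughly quadratically in $t$, and then selecting the smallest bucket containing $f_w$ so that the rounding error is exactly $1/(D^2N^2)$ and the confidence penalty is $2\log(D^2N^2\Gamma(w)+2)$. The only cosmetic difference is your choice $\delta_t=\delta/(t+1)^2$ versus the paper's $\delta/(t(t+1))$, which is immaterial since both yield $\log(2/\delta_t)\le\log(2/\delta)+2\log(t+1)$.
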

\begin{proof}
For aesthetic purposes, we define \( B \) as \( S_B \) and \( C \) as \( S_C \). We want to prove the bound for all $f_w \in \mathcal{F}$ where:
{
\begin{small}
\begin{equation*}
\begin{aligned}
&\mathcal{F} := \\&\{f_w : w = (A, B, C, S_\Delta, W), \forall k \in [L], d \in [D]:||\bar{A}^{}_{dk}||_{\max} < K < 1 \}
\end{aligned}
\end{equation*}
\end{small}}
Let $t \in \mathbb{N}$. Denote:
\begin{small}
\begin{equation*}
\begin{aligned}
&\mathcal{S}(t) := \{f_w \in \mathcal{F} , \Gamma(w) < {\frac{t}{D^2N^2}} \}
\end{aligned}
\end{equation*}
\end{small}
Correspondingly subdivide $\delta$ as:
\begin{small}
\begin{equation*}
\begin{aligned}
&\delta(t) := \frac{\delta}{t(t+1)}
\end{aligned}
\end{equation*}
\end{small}

By Lemma ~\ref{lem:loss_ramp} and Theorem ~\ref{theorem:rad}, with probability at least $1-\delta(t)$:
for any function $f_w \in \mathcal{S}(t) $, we have the following inequality:
\begin{small}
\begin{equation*}
\begin{aligned}
\err_P(f_w) - \err^\gamma_S(f_w) \leq \frac{2\sqrt{2}}{\gamma} \cdot \mathcal{R}(\mathcal{S}(t)) + 3\sqrt{\frac{\log(2/\delta(t))}{2m}}.
\end{aligned}
\end{equation*}
\end{small}
Using the union bound over all possible set $\mathcal{S}(t)$, we establish that the above probabilistic bound holds uniformly for all functions $f_w \in \mathcal{S}(t)$ with probability at least $1 - \delta$.
Hence, let  $f_w \in \mathcal{F}$ with weight vector {$w = (A, B, C, S_{\Delta}, W)$}. We choose the smallest $(t)$ such that, $f_w \in \mathcal{S}(t)$. We have:
\begin{small}
\begin{equation*}
\begin{aligned}
&\err_P(f_w) - \err^\gamma_S(f_w) \leq \frac{2\sqrt{2}}{\gamma} \cdot \mathcal{R}(S(t)) + 3\sqrt{\frac{\log(2/\delta(t))}{2m}}
\\ &= \frac{2\sqrt{2}}{\gamma m} \frac{t}{D^2N^2} D^{2} (1 + \sqrt{2\log (4L \mathcal{C} D^4 N)})\cdot \sqrt{\max_{t, k} \sum_{j=1}^m (X_{(j)_{tk}})^2} \frac{K}{(K-1)^2} + 3\sqrt{\frac{\log(2/\delta)+2\log(t+1)}{2m}}
\\ &\leq \frac{2\sqrt{2}}{\gamma m} ({\Gamma(w) } +{\frac{1}{D^2N^2}}) D^{2} (1 + \sqrt{2\log (4L \mathcal{C} D^4 N)}) \cdot  \sqrt{\max_{t, k} \sum_{j=1}^m (X_{(j)_{tk}})^2} \frac{K}{(K-1)^2}  3\sqrt{\frac{\log(2/\delta)+2\log({D^2N^2\Gamma(w) }+2)}{2m}}
\end{aligned}
\end{equation*}
\end{small}
\end{proof}

\section{Additional Related Work on Generalization\label{sec:RelatedWorkGeneralization}}
Explaining the performance of overparameterized deep neural networks (DNNs) on test data remains a major challenge in deep learning theory. Traditional tools like the PAC-learning framework and VC-dimension often provide vacuous bounds when the number of parameters greatly exceeds the number of data points. To address this, many studies conduct architecture-specific analyses. For instance, Allen et al. analyze the dynamics of stochastic gradient descent on RNNs with ReLU activations, offering optimization and generalization guarantees~\citep{allen2019can}. Other works have explored the generalization of RNNs for unseen data and longer sequences under various assumptions~\citep{cohen2022implicit, emami2021implicit, cohen2022learning, hardt2018gradient}.

There are relatively few results that address modern architectures such as S6 layers. A recent contribution proposes a bound for standard SSMs~\citep{liu2024generalization}, but it does not extend to Selective SSMs. To address this gap, we propose a new bound specifically for Selective SSMs.

\section{Limitations\label{sec:limitations}}
In this paper, we provide a new perspective on the expressivity gap between S6 layers and self-attention, the core layers of Mamba models and transformers. While our analysis leverages multivariate polynomial degrees as measures of expressiveness and offers insightful results, it does not formally connect these measures to widely-used expressivity metrics in the literature, such as Rademacher complexity or VC dimension. Establishing such connections remains an open challenge.

Additionally, our work focuses on a simplified architecture, omitting several components of the original models. While we empirically justify these simplifications and highlight the opportunity to use simpler models by identifying the key components responsible for the performance gap, analyzing the full complexity of softmax-based Transformer and Mamba architectures is an important direction for future research. 

Finally, although expressiveness is a critical property to explore, LLMs with billions of parameters involve additional factors that influence their capacity and performance. These include optimization challenges, gradient behavior, implicit biases, and training stability. Addressing these aspects is beyond the scope of this study, which is centered on a theoretical characterization of expressiveness. We believe these topics represent promising avenues for future work.

\end{document}